\newtheorem{theorem}{Theorem}
\newtheorem{proposition}{Proposition}
\DeclareMathOperator*{\argmin}{arg\,min}
\icmltitlerunning{Sub-Goal Trees -- a Framework for Goal-Based Reinforcement Learning}
\begin{document}

\twocolumn[
\icmltitle{Sub-Goal Trees -- a Framework for Goal-Based Reinforcement Learning}



\icmlsetsymbol{equal}{*}

\begin{icmlauthorlist}
\icmlauthor{Tom Jurgenson}{tech}
\icmlauthor{Or Avner}{tech}
\icmlauthor{Edward Groshev}{osaro}
\icmlauthor{Aviv Tamar}{tech}
\end{icmlauthorlist}

\icmlaffiliation{tech}{EE Department, Technion}
\icmlaffiliation{osaro}{Osaro Inc.}

\icmlcorrespondingauthor{Tom Jurgenson}{tomj@campus.technion.ac.il}
\icmlcorrespondingauthor{Aviv Tamar}{avivt@technion.ac.il}

\icmlkeywords{Machine Learning, ICML}

\vskip 0.3in
]



\printAffiliationsAndNotice{} 

\begin{abstract}
Many AI problems, in robotics and other domains, are goal-based, essentially seeking trajectories leading to various goal states. Reinforcement learning (RL), building on Bellman's optimality equation, naturally optimizes for a single goal, yet can be made multi-goal by augmenting the state with the goal. Instead, we propose a new RL framework, derived from a dynamic programming equation for the \emph{all pairs shortest path} (APSP) problem, which naturally solves multi-goal queries. We show that this approach has computational benefits for both standard and approximate dynamic programming.
Interestingly, our formulation prescribes a novel protocol for computing a trajectory: instead of predicting the next state given its predecessor, as in standard RL, a goal-conditioned trajectory is constructed by first predicting an intermediate state between start and goal, partitioning the trajectory into two. Then, recursively, predicting intermediate points on each sub-segment, until a complete trajectory is obtained. We call this trajectory structure a \emph{sub-goal tree}. Building on it, we additionally extend the policy gradient methodology to recursively predict sub-goals, resulting in novel goal-based algorithms. Finally, we apply our method to neural motion planning, where we demonstrate significant improvements compared to standard RL on navigating a 7-DoF robot arm between obstacles.
\end{abstract}

\section{Introduction}
Many AI problems can be characterized as learning or optimizing goal-based trajectories of a dynamical system, for example, robot skill learning and motion planning~\cite{mulling2013learning,gu2017deep,lavalle2006planning,qureshi2018motion}. The  reinforcement learning (RL) formulation, a popular framework for trajectory optimization based on Bellman's dynamic programming (DP) equation~\citep{bertsekas1996neuro}, naturally addresses the case of a single goal, as specified by a single reward function. RL formulations for multiple goals have been proposed~\citep{schaul2015universal,andrychowicz2017hindsight}, typically by augmenting the state space to include the goal as part of the state, but without changing the underlying DP structure.

On the other hand, in deterministic shortest path problems, multi-goal trajectory optimization is most naturally represented by the all-pairs shortest path (APSP) problem~\citep{russel2010AI}. In this formulation, augmenting the state and using Bellman's equation is known to be sub-optimal\footnote{This is equivalent to running the Bellman-Ford
single-goal algorithm for each possible goal state~\citep{russel2010AI}.}, and dedicated APSP algorithms such as Floyd-Warshall~\citep{floyd1962algorithm} build on different DP principles. Motivated by this, we propose a goal-based RL framework that builds on an efficient APSP solution, and is also applicable to large or continuous state spaces using function approximation.
\begin{figure*}[h]
\centering
\includegraphics[width=0.8\textwidth]{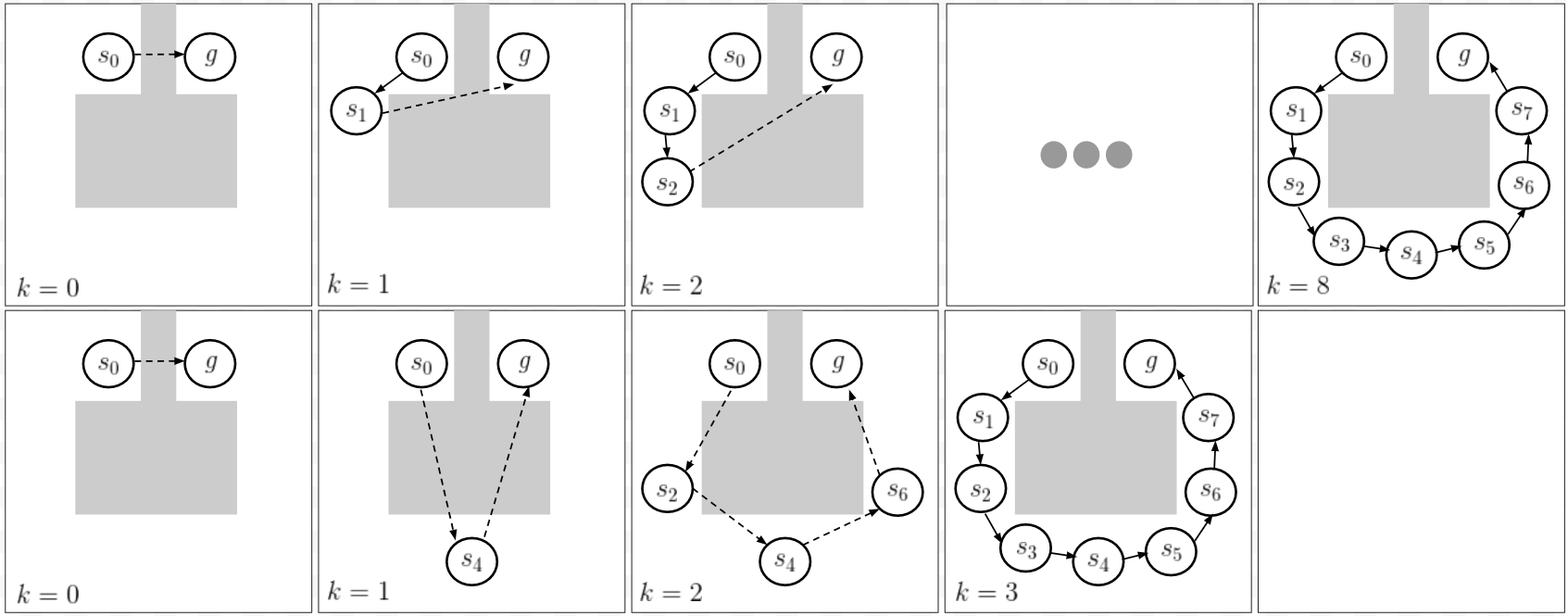}
\caption{Trajectory prediction methods. Upper row: a conventional \textit{Sequential } representation. Lower row: \textit{Sub-Goal Tree} representation. 
Solid arrows indicate predicted segments, while dashed arrows indicate segments that still require to be predicted.
By concurrently predicting sub-goals, a \textit{Sub-Goal Tree} only requires $3$ sequential computations, while the \textit{sequential} requires $8$.
}
\label{fig:one_step_vs_trajsplit}
\vspace{-5mm}
\end{figure*}

Our key idea is that a goal-based trajectory can be constructed in a divide-and-conquer fashion. First, predict a sub-goal between start and goal, partitioning the trajectory into two. Then, recursively, predict intermediate sub-goals on each sub-segment, until a complete trajectory is obtained. We call this trajectory structure a \emph{sub-goal tree}
(Figure~\ref{fig:one_step_vs_trajsplit}),
and we develop a DP equation for APSP that builds on it and is compatible with function approximation.
We further bound the error of following the sub-goal tree trajectory in the presence of approximation errors, and show favorable results compared to the conventional RL method, intuitively, due to the sub-goal tree's lower sensitivity to drift.

The sub-goal tree can also be seen as a general parametric structure for a trajectory, where a parametric model, e.g., a neural network, is used to predict each sub-goal given its predecessors. Based on this view, we develop a policy gradient framework for sub-goal trees, and show that conventional policy-gradient techniques such as control variates and trust regions~\citep{greensmith2004variance, schulman2015trust} naturally apply here as well. 

Finally, we present an application of our approach to neural motion planning -- learning to navigate a robot between obstacles~\cite{qureshi2018motion,jurgenson2019harnessing}. Using our policy gradient approach, we demonstrate navigating a 7-DoF continuous robot arm safely between obstacles, obtaining marked improvement in performance compared to conventional RL approaches.

\section{Related work}
In RL, the idea of sub-goals has mainly been investigated under the options framework~\cite{sutton1999between}. In this setting, the goal is typically fixed (i.e., given by the reward in the MDP), and useful options are discovered using some heuristic such as bottleneck states~\cite{mcgovern2001automatic, menache2002q} or changes in the value function~\cite{konidaris2012robot}. 
While hierarchical RL using options seems intuitive,
theoretical results for their advantage are notoriously difficult to establish, and current results require non-trivial assumptions on the option structure~\cite{mann2014scaling,fruit2017regret}. Interestingly, by investigating the APSP setting, we obtain general and strong results for the advantage of sub-goals. 

Universal value functions~\cite{schaul2015universal,andrychowicz2017hindsight} learn a goal-conditioned value function using the Bellman equation. In contrast, in this work we propose a principled motivation for sub-goals based on the APSP problem, and develop a new RL formulation based on this principle. A connection between RL and the APSP has been suggested by~\citet{kaelbling1993learning,dhiman2018floyd}, based on the Floyd-Warshall algorithm. However, as discussed in Section~\ref{sec:approx-sgtdp}, these approaches become unstable once function approximation is introduced. Goal-conditioned policies are also related to universal plans in the classical planning literature~\cite{schoppers1989representation,kaelbling1988goals,ginsberg1989universal}, and in this sense SGTs can be used to approximate universal plans using learning.

Sub-goal trees can be seen as a form of trajectory representation (as we discuss in Section~\ref{sec:sgt-pg}). Such has been investigated for learning robotic skills~\cite{mulling2013learning, sung2018robobarista} and navigation~\cite{qureshi2018motion}.
A popular approach, 
dynamical movement primitives
~\cite{ijspeert2013dynamical}, represents a continuous trajectory as a dynamical system with an attractor at the goal, and was successfully used for RL~\cite{kober2013reinforcement,mulling2013learning,peters2008reinforcement}. 
The temporal segment approach of~\citet{mishra2017prediction}, on the other hand, predicts segments of a trajectory sequentially.
In the context of video prediction, \citet{jayaraman2019time} proposed to predict salient frames in a goal-conditioned setting by a supervised learning loss that focuses on the `best' frames. This was used to predict a list of sub-goals for a tracking controller. In contrast, we investigate the RL problem,  and predict sub-goals \textit{recursively}.

Approximate APSP solutions have been traditionally explored in the non-learning setting. Approaches such as in~\citet{thorup2005approximate, chan2010more, williams2014faster} compute $k$-optimal paths ($k>1$) in running time that is sub-cubic in the number of vertices in the graph. These methods require full knowledge of the graph and process each vertex at least once. In contrast, our learning-based approach can generalize to unseen states (vertices) by learning from similar states, potentially scaling to very large or even continuous state spaces, as we demonstrate in our experiments.

\section{Problem Formulation and Notation}
We are interested in optimizing goal-conditioned tasks in dynamical systems. We restrict ourselves to deterministic, stationary, and finite time systems, and consider both discrete and continuous state formulations.\footnote{The deterministic setting is fundamental to our approach, and therefore we do not build on the popular Markov decision process framework. We leave the investigation of similar ideas to stochastic and time varying systems for future work. We note that deterministic systems are popular in RL applications, such as~\citet{bellemare2013arcade, duan2016benchmarking}.}

In the discrete state setting, we study the all-pairs shortest path (APSP) problem on a graph. Consider a directed and weighted graph with $N$ nodes $s_1,\dots, s_N$, and denote by $c(s,s')\geq 0$ the weight of edge $s\to s'$.\footnote{Technically, and similarly to standard APSP algorithms~\cite{russel2010AI}, we only require that there are no negative cycles in the graph. To simplify our presentation, however, we restrict $c$ to be non-negative. } By definition, we set $c(s,s) = 0$, such that the shortest path from a node to itself is zero.
To simplify notation, we can replace unconnected edges by edges with weight $\infty$. Thus, without loss of generality, throughout this work we assume that the graph is complete. The APSP problem seeks the shortest paths (i.e., a path with minimum sum of costs) \textbf{from any start node $s$ to any goal node $g$} in the graph:
\vspace{-5px}
\begin{equation}\label{eq:discrete_sp}
        \min_{T, s_0=s,s_1,\dots,s_{T-1},s_T=g} \sum_{t=0}^{T-1} c(s_t, s_{t+1}).
\end{equation}
\vspace{-13px}

In the continuous state case, we consider a deterministic controlled dynamical system in discrete time, defined over a state space $S$ and an action space $U$: 
where $s_t, s_{t+1}\in S$ are the states at time $t$ and $t+1$, respectively, $u_t \in U$ is the control at time $t$, and $f$ is a stationary state transition function. 
Given an initial state $s_0$ and a goal $g$, an optimal trajectory reaches $g$ while minimizing the sum of non-negative costs $\bar{c}(s,u)\geq 0$:
\vspace{-5px}
\begin{equation}\label{eq:opt_control}  
        \min_{T, u_0,\dots,u_{T}} \sum_{t=0}^T \bar{c}(s_t, u_t), 
        \textrm{ s.t. }
        {s_{t + 1}}\! =\! {f}({s_t},{u_t}), 
        {s_{T}}\! = \!g.
\end{equation}
\vspace{-13px}

To unify our treatment of problems \eqref{eq:discrete_sp} and \eqref{eq:opt_control}, in the remainder of this paper we abstract away the actions as follows. Let $c(s,s') = \min_u \bar{c}(s, u) \textrm{ s.t. } {s'} = {f}({s},{u})$, and $c(s,s')=\infty$ if no transition from $s$ to $s'$ is possible. Then, for any start and goal states $s,g$, Eq.~\eqref{eq:opt_control} is equivalent to:\footnote{In practice, and as we report in our experiments, such an abstraction can be easily implemented using an inverse model.}
\vspace{-5px}
\begin{equation}\label{eq:cont_sp}
        \min_{T, s_0=s,s_1,\dots,s_{T-1},s_T=g} \sum_{t=0}^{T-1} c(s_t, s_{t+1}).
\end{equation}
Note the similarity of Problem \eqref{eq:cont_sp} to the APSP problem \eqref{eq:discrete_sp}, where feasible transitions of the dynamical system are represented by edges between states in the graph. In the rest of this paper, we study solutions for \eqref{eq:discrete_sp} and \eqref{eq:cont_sp}, for all start and goal pairs $s,g$. We are particularly interested in large problems, where exact solutions are intractable, and approximations are required.

\textbf{Notation:} Let $\tau\!=\!s_0,\dots,s_T$ denote a state trajectory. 
Also, denote $c_{i:j}\!=\!\sum_{t=i}^{j-1}{\!c(s_t, s_{t+1})}$, the cost of several subsequent states and $c_{\tau}=c_{0:T}$ as the total trajectory cost. 
Let $\tau(s_0,s_m)\!\!=\!\!s_0,\dots,s_m$ denote the segment of $\tau$ starting at $s_0$ and ending at $s_m$. 
We denote the trajectory concatenation as $\tau \!\!=\!\! [\tau(s_0,s_m), \tau(s_m,s_T)]$, where it is implied that $s_m$ appears only once. 
For a stochastic trajectory distribution conditioned on start and goal, we denote  $\Pr_{\pi}[s_i,\dots, s_j | s, g]$ as shorthand for $\Pr_{\pi}[s_i,\dots, s_j | s_i\!=\!s,\!s_j\!=\!g]$.

\section{Approximate Dynamic Programming}

In this section we study a dynamic programming principle for the APSP problem \eqref{eq:discrete_sp}. We will later extend our solution to the continuous case \eqref{eq:cont_sp} using function approximation.

One way to solve \eqref{eq:discrete_sp} is to solve a single-goal problem for every possible goal state. The RL equivalent of this approach is UVFA, suggested by \citet{schaul2015universal}, where the state space is augmented with the goal state, and a goal-based value function is introduced. 
The UVFA is optimized using standard (single-goal) RL, by learning over different goals. 
Alternatively, our approach builds on an efficient APSP dynamic programming algorithm, as we propose next.

\subsection{Sub-Goal Tree Dynamic Programming}\label{sec:SGT_DP}


By our definition of non-negative costs, the shortest path between any two states is of length at most $N$. The main idea in our approach is that a trajectory between $s$ and $g$ can be constructed in a divide-and-conquer fashion: first predict a sub-goal between start and goal that optimally partitions the trajectory to two segments of length $N/2$ or less: $s,\dots,s_{N/2}$ and $s_{N/2},\dots,g$. Then, recursively, predict intermediate sub-goals on each sub-segment, until a complete trajectory is obtained. We term this composition a \textit{sub-goal tree} (SGT), and we formalize it as follows.

Let $V_k(s, s')$ denote the shortest path from $s$ to $s'$ in $2^k$ steps or less. Note that by our convention about unconnected edges above, if there is no such trajectory in the graph then $V_k(s, s')=\infty$. We observe that $V_k$ obeys the following dynamic programming relation, which we term \emph{sub-goal tree dynamic programming} (SGTDP).
\begin{theorem}\label{thm:SGTDP}
Consider a weighted graph with $N$ nodes and no negative cycles. Let $V_k(s, s')$ denote the cost of the shortest path from $s$ to $s'$ in $2^k$ steps or less, and let $V^*(s,s')$ denote the cost of the shortest path from $s$ to $s'$. Then, $V_k$ can be computed according to the following equations: 
\begin{equation}\label{eq:DP_trajsplit}
    \begin{split}
        V_0(s, s') &= c(s, s'), \quad \forall s,s': s\neq s'; \\
        V_k(s, s)\text{ } &= 0, \quad \forall s; \\
        V_k(s, s') &= \!\min_{s_m} \!\left\{ V_{k-1}(s, s_m) \!+\! V_{k-1}(s_m, s')\right\}\!, \forall s,\!s' \!\!:\! s\!\neq \!s' \!.
    \end{split}\raisetag{4em}
\end{equation}
Furthermore, for $k \geq \log_2(N)$ we have that $V_k(s, s') = V^*(s, s')$ for all $s,s'$. 
\end{theorem}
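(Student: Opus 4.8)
The plan is to prove the claim in two stages: first establish by induction on $k$ that the recursion \eqref{eq:DP_trajsplit} computes exactly the cost of the cheapest walk using at most $2^k$ edges, and then argue that once $2^k$ exceeds the number of nodes this quantity coincides with the unconstrained shortest-path cost $V^*$. To keep the induction honest I would introduce a separate, semantically defined quantity $W_k(s,s')$, equal to the minimum cost over all walks from $s$ to $s'$ that use at most $2^k$ edges (with $W_k(s,s)=0$ via the empty walk, and $W_k(s,s')=\infty$ when no such walk exists), and then prove $V_k = W_k$ for every $k$ and every pair. Keeping the recursively-defined $V_k$ notationally distinct from its intended meaning $W_k$ is what prevents the argument from silently assuming its conclusion.

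For the base case $k=0$, a walk of length at most $2^0=1$ between distinct $s,s'$ is exactly a single edge, so $W_0(s,s')=c(s,s')=V_0(s,s')$, while $W_0(s,s)=0=V_0(s,s)$. For the inductive step, fix $k\geq 1$ and $s\neq s'$ and prove the two inequalities. The direction $V_k \geq W_k$ is immediate: for every candidate midpoint $s_m$, the inductive hypothesis gives walks of length at most $2^{k-1}$ realizing $V_{k-1}(s,s_m)$ and $V_{k-1}(s_m,s')$, so their concatenation is a walk of length at most $2^k$ whose cost $V_{k-1}(s,s_m)+V_{k-1}(s_m,s')$ is at least $W_k(s,s')$; minimizing over $s_m$ preserves this. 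The reverse direction $V_k \leq W_k$ is the crux: I would take an optimal walk $s=p_0,\dots,p_L=s'$ of length $L\leq 2^k$ achieving $W_k$, split it at index $j=\lceil L/2\rceil$, and set $s_m=p_j$. The bookkeeping $\lceil L/2\rceil \leq 2^{k-1}$ and $L-\lceil L/2\rceil=\lfloor L/2\rfloor \leq 2^{k-1}$ certifies that both halves respect the $2^{k-1}$ budget, so the inductive hypothesis bounds $V_{k-1}(s,s_m)$ and $V_{k-1}(s_m,s')$ by the costs of the two halves; summing and using that $s_m$ is one admissible choice in the minimum yields $V_k(s,s')\leq W_k(s,s')$.

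The main obstacle I anticipate is exactly this split. I must confirm that the worst case $L=2^k$ is still handled — it is, but only tightly, which is precisely why the \emph{halving} structure (rather than an arbitrary cut) is essential — and that short optimal walks are not lost: allowing $s_m\in\{s,s'\}$ in the minimum forces $V_k \leq V_{k-1}$, so the recursion is monotone and nothing is discarded when the optimal walk already has length below $2^{k-1}$. I would also note that $\infty$ entries cause no trouble: concatenating a finite with an infinite segment stays infinite, and both inequalities degenerate gracefully.

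Finally, for the identification with $V^*$, I would invoke the standard fact that with no negative cycles a shortest path may be taken to be simple, hence uses at most $N-1$ edges, since any repeated vertex encloses a nonnegative-cost cycle that can be excised without increasing the total cost. Thus the length constraint defining $W_k$ becomes vacuous as soon as $2^k \geq N-1$, and in particular for $k\geq \log_2(N)$ we have $2^k \geq N > N-1$, giving $V_k(s,s')=W_k(s,s')=V^*(s,s')$ for all $s,s'$.
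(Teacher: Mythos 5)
Your proof is correct and follows essentially the same route as the paper's: an induction on $k$ establishing that the recursion equals the cost of the cheapest walk of length at most $2^k$, with the concatenation argument giving one inequality and the split-at-the-midpoint argument giving the other, followed by the observation that the length constraint is vacuous once $2^k \geq N$. Your version is simply a more careful write-up of the same idea — the explicit $V_k$ versus $W_k$ distinction, the $\lceil L/2\rceil$/$\lfloor L/2\rfloor$ bookkeeping, and the treatment of $\infty$ and degenerate midpoints tighten steps that the paper states more loosely (e.g., its appeal to completeness of the graph and its contradiction phrasing).
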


The proof of Theorem \ref{thm:SGTDP}, along with all other proofs in this paper, 
is
reported in the supplementary material.

Given $V_0,\dots,V_{\log_2(N)}$, Theorem \ref{thm:SGTDP} prescribes the following recipe for computing a shortest path for every $s,g$, which we term the \textbf{greedy SGT trajectory}:\footnote{Note that the cost of the greedy SGT trajectory is optimal, however, its length is by definition $N$. Thus, a shorter-length trajectory with the same cost may exist. Simple modifications of SGT to find the shortest-length cost optimal trajectory are possible, but we do not consider them here. }
\begin{small}
\begin{align} \label{eq:greedy_traj}
        s_0 &= s,\text{ }s_N = g \\ \nonumber
        s_{N/2} &\in \argmin_{s_m} \!\left\{\! V_{\log_2 (N)-1}(\!s_0, s_m\!) \!+\! V_{\log_2 (N)-1}(s_m, s_N)\!\right\}\!, \\ \nonumber
        s_{N/4} &\in \argmin_{s_m} \!\left\{\! V_{\log_2 (N)-2}(\!s_0, s_m\!) \!+\! V_{\log_2 (N)-2}(s_m, s_{N/2})\!\right\}\!,
\end{align}
\end{small}
\begin{small}
\begin{align}
        \nonumber
        s_{3N/4} &\in \argmin_{s_m} \!\left\{\! V_{\log_2 (N)-2}(\!s_{N/2}, s_m\!) \!+\! V_{\log_2 (N)-2}(s_m, s_{N})\!\right\}\!, \\
        \nonumber
        \dots& 
\end{align}\raisetag{6em}
\end{small}
The SGT can be seen as a general divide-and-conquer representation of a trajectory, by recursively predicting sub-goals. In contrast, the construction of a greedy trajectory in conventional RL (using the Bellman equation) proceeds \emph{sequentially}, by predicting state (or action) after state. This is illustrated in Figure~\ref{fig:one_step_vs_trajsplit}. We next investigate the advantages of the SGT structure compared to the sequential approach. 

In conventional RL, it is well known that in the presence of errors, following a greedy policy may suffer from \emph{drift} -- accumulating errors that may hinder performance~\citep{ross2011reduction}.
In the goal-based setting, intuitively, the error magnitude scales with the distance from the goal. Our main observation is that the SGT is less sensitive to drift, as the sub-goals break the trajectory into smaller segments with exponentially decreasing errors. We next formalize this idea, by analyzing an approximate DP formulation.

\subsection{SGT Approximate DP}\label{sec:approx-sgtdp}

Similar to standard dynamic programming algorithms, the SGTDP algorithm iterates over all states in the graph, which becomes infeasible for large, or continuous state spaces. For such cases, inspired by the approximate dynamic programming (ADP) literature~\cite{bertsekas2005dynamic}, we investigate ADP methods based on SGTDP. 

Let $T$ denote the SGTDP operator, $(TV)(s,s') = \min_{s_m} \left\{ V(s, s_m) + V(s_m, s')\right\}$. From Theorem \ref{thm:SGTDP}, we have that $V^* = T^{\log_2(N)}V_0$. Similarly to standard ADP analysis~\cite{bertsekas2005dynamic}, we assume an $\epsilon$-approximate SGTDP operator that generates a sequence of approximate value functions $\hat{V}_0,\dots, \hat{V}_{\log_2(N)}$ that satisfy:
\begin{equation}\label{eq:error}
    \| \hat{V}_{k+1} - T \hat{V}_k \|_\infty \leq \epsilon, \quad \quad \| \hat{V}_{0} - V_0 \|_\infty \leq \epsilon,
\end{equation}
where $\|x\|_\infty = \max_{s,s'} |x(s,s')|$. The next result provides an error propagation bound for SGTDP.\footnote{For conventional RL, $\|\cdot\|_p$ errors bounds were developed by \citet{munos2008finite}, and are more suitable for the learning setting. We leave such investigation to future work, and emphasize that our simpler $\|\cdot\|_\infty$ bounds still show the fundamental soundness of our approach. } 

\begin{proposition}\label{prop:err_prop}
For the sequence $\hat{V}_0,\dots, \hat{V}_{\log_2(N)}$ satisfying \eqref{eq:error}, we have that $\| \hat{V}_{\log_2(N)} - V^* \|_\infty \leq \epsilon(2N-1)$.
\end{proposition}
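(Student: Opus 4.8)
The plan is to track how the per-step approximation error $\epsilon$ propagates through the $\log_2(N)$ applications of the SGTDP operator $T$. The key observation is that the operator $T$ combines two sub-paths additively, so when we replace each exact value by an approximate one, the errors from both halves add up. Concretely, I would define $e_k = \| \hat{V}_k - V_k \|_\infty$, where $V_k = T^k V_0$ is the exact $k$-th iterate, and derive a recursion for $e_k$. Using the triangle inequality, I would split $\| \hat{V}_{k+1} - V_{k+1} \|_\infty \leq \| \hat{V}_{k+1} - T\hat{V}_k \|_\infty + \| T\hat{V}_k - TV_k \|_\infty$; the first term is bounded by $\epsilon$ via assumption \eqref{eq:error}, so the crux is bounding the second term.

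The main technical step is to show that $T$ expands the $\|\cdot\|_\infty$ error by a factor of exactly $2$, i.e. $\| T\hat{V}_k - TV_k \|_\infty \leq 2 e_k$. This follows because, for any fixed $s,s'$, both $(T\hat{V}_k)(s,s')$ and $(TV_k)(s,s')$ are minima over $s_m$ of a sum of two terms, each of which differs from its counterpart by at most $e_k$; a standard argument that the minimum of functions differing pointwise by at most $\delta$ differ by at most $\delta$ then gives the factor $2$ (one $e_k$ from each summand). I expect this to be the central lemma, and the only place where the additive structure of the operator manifests in the constant $2$. It is worth being careful that the boundary cases $V_k(s,s)=0$ and the $V_0$ base case are handled so the bound holds uniformly.

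Putting these together yields the recursion $e_{k+1} \leq 2 e_k + \epsilon$ with base case $e_0 \leq \epsilon$. Unrolling this linear recurrence over $k$ from $0$ to $\log_2(N)$ gives a geometric sum: $e_{\log_2(N)} \leq \epsilon \sum_{j=0}^{\log_2(N)} 2^j = \epsilon(2^{\log_2(N)+1} - 1) = \epsilon(2N - 1)$, which is exactly the claimed bound. Since Theorem~\ref{thm:SGTDP} guarantees $V_{\log_2(N)} = V^*$, we may substitute $V^*$ for $V_{\log_2(N)}$ in the final inequality to conclude $\| \hat{V}_{\log_2(N)} - V^* \|_\infty \leq \epsilon(2N-1)$.

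I anticipate the hardest part to be the error-expansion lemma for the $\min$ operator, specifically arguing cleanly that the factor is $2$ rather than $1$: one must handle that the minimizing $s_m$ for $\hat{V}_k$ and for $V_k$ need not coincide, so a two-sided bound (using that each individual summand is off by at most $e_k$, and the sum therefore by at most $2 e_k$, uniformly over the choice of $s_m$) is needed. The unrolling of the recurrence is then routine, and I would only double-check the geometric-sum index so the constant comes out to $2N-1$ rather than an off-by-one variant.
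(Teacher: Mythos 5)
Your proof is correct and takes essentially the same approach as the paper's: both reduce the claim to the recursion $e_{k+1} \leq 2 e_k + \epsilon$ with $e_0 \leq \epsilon$, unroll it to $\|\hat{V}_{\log_2 N} - V_{\log_2 N}\|_\infty \leq (2N-1)\epsilon$, and conclude using $V_{\log_2 N} = V^*$ from Theorem~\ref{thm:SGTDP}. The only difference is cosmetic: you establish the error-doubling step as a direct $2$-Lipschitz bound $\|T\hat{V}_k - TV_k\|_\infty \leq 2\|\hat{V}_k - V_k\|_\infty$ by comparing minima, while the paper sandwiches $\hat{V}_k$ between $T^k V_0 \pm (2^{k+1}-1)\epsilon e$ using monotonicity of $T$ and the constant-shift identity $T(V \pm \epsilon e) = TV \pm 2\epsilon e$; both arguments rest on the same evaluate-at-the-other-minimizer trick.
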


A similar $O(\epsilon N)$ error bound is known for ADP based on the approximate Bellman operator~\citep[][Pg. 332]{bertsekas1996neuro}.
Thus, Proposition \ref{prop:err_prop} shows that given the same value function approximation method, we can expect a similar error in the value function between the SGT and sequential approaches. However, the main importance of the value function is in deriving a policy, in this case, a trajectory from start to goal. As we show next, the shortest path derived from the approximate SGTDP value function can be significantly more accurate than a path derived using the Bellman value function.

We first discuss 
how to compute
a greedy trajectory. Given a sequence of $\epsilon$-approximate SGTDP value functions $\hat{V}_0,\dots, \hat{V}_{\log_2(N)}$ as described above, one can compute the greedy SGT trajectory by plugging the approximate value functions in \eqref{eq:greedy_traj} instead of their respective accurate value functions. We term this the \textbf{approximate greedy SGT trajectory}, and provide an error bound for it.

\begin{proposition}\label{prop:STDP_error}
For a start and goal pair $s,g$, and sequence of value functions $\hat{V}_0,\dots, \hat{V}_{\log_2(N)}$ generated by an $\epsilon$-approximate SGTDP operator, let $s_0,\dots,s_N$ denote the approximate greedy SGT trajectory as described above.
We have that $\sum_{i=0}^{N-1} c(s_i,s_{i+1}) \leq V^*(s,g) + 4N \log_2{(N)} \epsilon $. 
\vspace{-8px}
\end{proposition}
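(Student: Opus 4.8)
The plan is to bound the cost of the approximate greedy SGT trajectory in two stages: first show that its cost exceeds $\hat{V}_{\log_2 N}(s,g)$ by at most $O(N\epsilon)$, and then invoke Proposition~\ref{prop:err_prop} to replace $\hat{V}_{\log_2 N}(s,g)$ by $V^*(s,g)$ at the cost of another $O(N\epsilon)$ term. Writing $K = \log_2 N$, the trajectory is built in $K$ rounds: round $j$ refines each of the $2^{j-1}$ current segments by inserting a midpoint chosen greedily with $\hat{V}_{K-j}$, so that after $K$ rounds we have $N$ unit segments whose endpoints are the consecutive states $s_0,\dots,s_N$ of the output trajectory.

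The core idea is a telescoping/potential argument showing that errors accumulate additively across rounds rather than compounding multiplicatively. I would define, for $j = 0,\dots,K$, the potential $\Phi_j = \sum \hat{V}_{K-j}(a,b)$, summing over the $2^j$ segments $[a,b]$ present after $j$ rounds, so that $\Phi_0 = \hat{V}_K(s,g)$. The key step is the per-round inequality $\Phi_{j+1} \le \Phi_j + 2^j\epsilon$. To prove it, fix a segment $[a,b]$ counted in $\Phi_j$ and let $m$ be its greedily chosen midpoint; then $\hat{V}_{K-j-1}(a,m)+\hat{V}_{K-j-1}(m,b) = (T\hat{V}_{K-j-1})(a,b)$ by definition of the greedy split, and this is at most $\hat{V}_{K-j}(a,b)+\epsilon$ by the approximation assumption \eqref{eq:error}. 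Summing over the $2^j$ segments gives the claimed bound, and telescoping over $j=0,\dots,K-1$ yields $\Phi_K \le \hat{V}_K(s,g) + (N-1)\epsilon$.

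It then remains to pass from the potential to the true trajectory cost. Since $\Phi_K = \sum_{i=0}^{N-1}\hat{V}_0(s_i,s_{i+1})$ and $\|\hat{V}_0 - V_0\|_\infty \le \epsilon$ with $V_0 = c$, each of the $N$ edges contributes at most $\epsilon$ of slack, so $\sum_{i=0}^{N-1} c(s_i,s_{i+1}) \le \Phi_K + N\epsilon \le \hat{V}_K(s,g) + (2N-1)\epsilon$. Finally, Proposition~\ref{prop:err_prop} gives $\hat{V}_K(s,g) \le V^*(s,g) + (2N-1)\epsilon$, and combining the two bounds yields $\sum_{i=0}^{N-1}c(s_i,s_{i+1}) \le V^*(s,g) + (4N-2)\epsilon$, which already implies the stated $V^*(s,g) + 4N\log_2(N)\epsilon$ for $N\ge 2$.

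I expect the main obstacle to be the per-round inequality and, more conceptually, arguing that the top-down coupling of sub-goals (each round's midpoints become the next round's endpoints) does not cause errors to blow up geometrically. The potential function sidesteps this by never comparing the chosen midpoints to the unknown true-optimal ones; it only uses that each greedy split re-anchors the cost to the next approximate value function with an additive $\epsilon$ slack per segment. I would also verify that the argument degrades gracefully when some $\hat{V}$ entries equal $\infty$ (unreachable pairs), which is harmless whenever $V^*(s,g)$ is finite.
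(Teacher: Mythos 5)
Your proof is correct, and it takes a genuinely different --- and in fact sharper --- route than the paper's. The paper's proof re-anchors every split to the \emph{exact} value functions: for each node at level $k$ it bounds $\left| V_{k-1}(s,\hat{s}_m) + V_{k-1}(\hat{s}_m,g) - V_k(s,g)\right|$, which forces it to pay the accumulated error $\| \hat{V}_k - V_k \|_\infty \leq (2^{k+1}-1)\epsilon$ (the intermediate estimate inside the proof of Proposition~\ref{prop:err_prop}) at \emph{every} level of the tree; the resulting per-split bound $2^{k+2}\epsilon$, times the $N/2^k$ splits at level $k$, contributes $4N\epsilon$ per level and hence $4N\log_2(N)\epsilon$ in total. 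Your potential argument instead stays entirely in terms of the approximate value functions, charging only the single-step error $\| \hat{V}_{K-j} - T\hat{V}_{K-j-1} \|_\infty \leq \epsilon$ per split (which is exactly assumption \eqref{eq:error}, correctly invoked), and converts to exact quantities only at the two ends: at the leaves, where $\hat{V}_0 \to c$ costs $N\epsilon$ using $\|\hat{V}_0 - V_0\|_\infty \leq \epsilon$ and the base case of Theorem~\ref{thm:SGTDP}, and at the root, where $\hat{V}_K \to V^*$ costs $(2N-1)\epsilon$ by Proposition~\ref{prop:err_prop}. This is precisely what eliminates the $\log N$ factor: your bound $(4N-2)\epsilon = O(N\epsilon)$ is strictly tighter than the paper's $O(N\log_2 N \, \epsilon)$ and implies the stated inequality for $N \geq 2$. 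Notably, your argument would even strengthen the paper's headline comparison against the sequential Bellman approach (which has a tight $O(N^2\epsilon)$ bound, per Propositions~\ref{prop:Bellman_multiple_v_error} and~\ref{prop:Bellman_multiple_v_error_example}), improving the claimed advantage from $O(N\log N)$ vs.\ $O(N^2)$ to $O(N)$ vs.\ $O(N^2)$. Your closing concerns are benign: self-transitions cause no trouble since $c(s,s)=0$ and all pairs are treated uniformly, and infinite entries cannot arise along the greedy tree when $V^*(s,g)$ is finite, because the telescoped bound $\Phi_K \leq \hat{V}_K(s,g) + (N-1)\epsilon$ together with \eqref{eq:error} keeps every segment value finite.
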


Thus, the error of the greedy SGT trajectory is $O(N \log_2{(N)})$. In contrast, for the greedy trajectory according to the standard finite-horizon Bellman equation, a tight $O(N^2)$ bound holds~\citep[][we also provide an independent proof in the supplementary material]{ross2011reduction}. Thus, the SGT approach provides a strong improvement in handling errors compared to the sequential method. In addition, the SGT approach requires us to compute and store only $\log_2(N)$ different value functions. In comparison, a standard finite horizon approach would require storing $N$ value functions, which can be limiting for large $N$.
Finally, during trajectory prediction, sub-goal predictions for different branches of the tree are independent and can be computed concurrently, allowing an $O(\log_2(N))$ prediction time, whereas sequentially predicting sub-goals is $O(N)$. We thus conclude that the SGT provides significant benefits in both approximation accuracy, prediction time, and space.

\paragraph{Why not use the Floyd-Warshall Algorithm?}
At this point, the reader may question why we do not build on the Floyd-Warshall (FW) algorithm for the APSP problem. The FW method maintains a value function $V_{FW}(s,s')$ of the shortest path from $s$ to $s'$, and updates the value using the relaxation $V_{FW}(s,s') := \min \left\{V_{FW}(s,s'), V_{FW}(s,s_m)+V_{FW}(s_m,s')\right\}$. If the updates are performed over all $s_m, s,$ and $s'$ (in that sequence), $V_{FW}$ will converge to the shortest path, requiring $O(N^3)$ computations, compared to $O(N^3\log N)$ for SGTDP~\citep{russel2010AI}. One can also perform relaxations in an arbitrary order, as was suggested by \citet{kaelbling1993learning}, and more recently by~\citet{dhiman2018floyd}, to result in an RL style algorithm. However, as was already observed by~\citet{kaelbling1993learning}, the FW relaxation requires that the values always over-estimate the optimal costs, and any under-estimation error, due to noise or function approximation, gets propagated through the algorithm without any way of recovery, leading to instability. Our SGT approach avoids this problem by updating $V_k$ based on $V_{k-1}$, resembling finite horizon dynamic programming, and, as we proved, maintains stability in presence of errors. Furthermore, both \citet{kaelbling1993learning,dhiman2018floyd} showed results only for table-lookup value functions. In our experiments, we have found that replacing the SGTDP update with a FW relaxation (described in the supplementary) leads to instability when used with function approximation.

\section{Sub-Goal Tree RL Algorithms}
Motivated by the theoretical results of the previous section, we present algorithms for learning SGT policies. We start with a value-based batch-RL algorithm with function approximation in Section~\ref{sec:batch-rl-sgt}. Then, in Section~\ref{sec:sgt-pg}, we present a policy gradient approach for SGTs.

\subsection{Batch RL with Sub-Goal Trees}\label{sec:batch-rl-sgt}
We now describe a batch RL algorithm with function approximation based on the SGTDP algorithm above. Our approach is inspired by the fitted-Q iteration (FQI) algorithm for finite horizon Markov decision processes (\citealt{tsitsiklis2001regression}; see also~\citealt{ernst2005tree,riedmiller2005neural} for the discounted case). 
Similar to FQI, we are given a data set of $M$ random state transitions and their costs $\{(s_i,s'_i,c_i )\}_{i=1}^M$, and we want to estimate $V_k(s,s')$ for arbitrary pairs $(s,s')$.
Assume that we have some estimate $\hat{V}_k(s,s')$ of the value function of depth $k$ in SGTDP. Then, for any pair of start and goal states $s,g$, we can estimate $\hat{V}_{k+1}(s,g)$ as 
\vspace{-0.5em}
\begin{equation}\label{eq:STDP_min}
    \hat{V}_{k+1}(s,g) = \min_{s_m} \left\{ \hat{V}_{k}(s, s_m) + \hat{V}_{k}(s_m, g)\right\}.
\end{equation} 
Thus, if our data consisted of start and goal pairs, we could use \eqref{eq:STDP_min} to generate \emph{regression targets} for the next value function, and use any regression algorithm to fit $\hat{V}_{k+1}(s,s')$. This is the essence of the Fitted SGTDP algorithm (Algorithm \ref{alg:approximate_STDP__}). Since our data does not contain explicit goal states, we simply define goal states to be randomly selected states from within the data (lines 6,7 in Alg.~\ref{alg:approximate_STDP__}).

The first iteration $k=0$ in Fitted SGTDP, however, requires special attention. We need to fit the cost function for connected states in the graph, yet make sure that states which are not reachable in a single transition have a high cost. To this end, we fit the observed costs $c$ to the observed transitions $s,s'$ in the data (lines 2,5 in Alg.~\ref{alg:approximate_STDP__}), and a high cost $C_{max}$ to transitions from the observed states to randomly selected states (lines 3,5 in Alg.~\ref{alg:approximate_STDP__}). We also fit a cost of zero to self transitions (lines 4,5 in Alg.~\ref{alg:approximate_STDP__}).

Our algorithm also requires a method to approximately solve the minimization problem in \eqref{eq:STDP_min}. In our experiments, we discretized the state space and performed a simple grid search. Other methods could be used in general. For example, if $V_k$ is represented as a neural network, then one can use gradient descent. Naturally, the quality of Fitted SGTDP will depend on the quality of solving this minimization problem.

\begin{algorithm}[htp]\caption{Fitted SGTDP}
  \SetAlgoLined\DontPrintSemicolon
  \SetKwProg{myalg}{Algorithm}{}{}
  \label{alg:approximate_STDP__}
  \myalg{ }{
  \setcounter{AlgoLine}{0}
  \nl Input: dataset $D = \left\{ s, c, s'\right\}$, max path cost $C_{max}$ \\
  \nl Create real transition data:\\
  $D_{trans} = \left\{ s, s'\right\}$ with targets $T_{trans} = \left\{ c\right\}$ from $D$\\
  \nl Create fake transition data:\\ $D_{random} = \left\{ s, s_{rand}\right\}$ with targets $T_{random} = \left\{ C_{max}\right\}$ with $s,s_{rand}$ random states from $D$\\
  \nl Create self transition data:\\ $D_{self} = \left\{ s, s\right\}$ with targets $T_{self} =\left\{ 0\right\}$ with $s$ taken from $D$\\
  \nl Fit $\hat{V}_0(s,s')$ to data in $D_{trans}, D_{random}, D_{self}$ and targets $T_{trans}, T_{random}, T_{self}$\\
  \For{$k: 1...K$}{
  \nl Create goal data $D_{goal} \!=\! \left\{ s, g\right\}$ and targets 
$T_{goal} \!=\! \{ \min_{s_m} \!\!\{ \hat{V}_{k-1}(s, s_m) \!+\! \hat{V}_{k-1}(s_m, g)\}\}$ with $s,g$ randomly chosen from states in $D$\\
  \nl Fit $\hat{V}_k(s,s')$ to data in $D_{goal}$ and targets in $T_{goal}$\\
  }  }
\end{algorithm}
\vspace{-10px}

\subsection{Sub-Goal Trees Policy-Gradient}\label{sec:sgt-pg}

The minimization over sub-goal states in Fitted SGTDP can be difficult for high-dimensional state spaces. 
A similar problem arises in standard RL with continuous actions~\cite{bertsekas2005dynamic,kalashnikov2018qt}, and has motivated the study of policy search methods, which directly optimize the policy~\citep{deisenroth2013survey}. 
Following a similar motivation, we propose a policy search approach based on SGTs.
Inspired by policy gradient (PG) algorithms in conventional RL~\cite{sutton2000policy,deisenroth2013survey}, we propose a parametrized stochastic policy that approximates the optimal sub-goal prediction, and we develop a corresponding PG theorem for training the policy.

\textbf{Stochastic SGT policies:} 
A stochastic SGT policy $\pi(s' | s_1,s_2)$ is a stochastic mapping from two endpoint states $s_1, s_2\!\in\! S$ to a predicted sub-goal $s'\!\in\!S$.
Given a start state $s_0\!=\!s$ and goal $s_T\!=\!g$, the likelihood of $\tau\!=\!s_0, s_1, \dots, s_T$ under policy $\pi$ is defined recursively by: 
\begin{align}\label{eq:recursive-trajectory-likelihood}
&\Pr_{\pi}[\tau | s, g]= \Pr_{\pi}[s_0, \dots, s_T | s, g] \\
&= \Pr_{\pi}[s_0, \dots, s_{\frac{T}{2}} | s, s_m] 
    \Pr_{\pi}[s_{\frac{T}{2}}, \dots, s_T | s_m, g] \pi(s_m | s, g),\nonumber
\end{align}\raisetag{4em}
where the base of the recursion is $\Pr_{\pi}[s_t,s_{t+1} | s_t, s_{t+1}]=1$ for all $t\in[0,T-1]$.
This formulation assumes that sub-goal predictions within a segment depend only on states within the segment, and not on states before or after it. This is analogous to the Markov property in conventional RL, where the next state prediction depends only on the previous state, but adapted to a goal-conditioned setting.

Note that this recursive decomposition can be interpreted as a tree. 
Without loss of generality, we assume this tree has a depth of $D$ thus $T=2^D$ 
(repeating states to make the tree a full binary tree does not incur extra cost as $c(s,s)=0$).

We now define the PG objective. 
Let $\rho_0$ denote a distribution over start and goal pairs $s,g \in S$. our goal is to learn a stochastic policy $\pi_{\theta}(s' | s_1,s_2)$, characterized by a parameter vector $\theta$, that minimizes the expected trajectory costs:
\begin{align}\label{eq:cost-function-J}
    J(\theta) 
    = J^{\pi_{\theta}} 
    = &\mathbb{E}_{\tau\sim\rho(\pi_{\theta})}\left[c_{\tau} \right],
\end{align}
where the trajectory distribution $\rho(\pi_{\theta})$ is defined by first drawing $(s,g)\sim \rho_0$, and then recursively drawing intermediate states as defined by Eq.~\eqref{eq:recursive-trajectory-likelihood}.
Our next result is a PG theorem for SGTs. 

\begin{theorem}\label{thm:sgt-pg}
Let $\pi_{\theta}$ be a stochastic SGT policy,  $\rho(\pi_{\theta})$ be a trajectory distribution defined above, and $T=2^D$.
Then 
\begin{align}\label{eq:pg-thm-gradient-short}
    &\nabla_{\theta}J(\theta)=\mathbb{E}_{\rho(\pi_{\theta})}\left[ c_{\tau}\cdot \nabla_{\theta}\log{\Pr_{\rho(\pi_{\theta})}[\tau] }\right]  \\
    &=\mathbb{E}_{\rho(\pi_{\theta})}\left[
    \sum_{d=1}^{D}{ \sum_{i=1}^{2^{D-d}}{
    C_{\tau}^{i,d}\!\cdot \!
    \nabla_{\theta}\log{\pi_{\theta}\left(s_m^{i,d} \bigg\vert s^{i,d}, g^{i,d}\right) } } }
    \right]\!, \nonumber
\end{align}
where $s^{i,d}=s_{(i-1)\cdot 2^d}$, $s_m^{i,d}=s_{(2i-1)\cdot 2^{d-1}}$, $g^{i,d}=s_{i\cdot 2^d}$, and $C_{\tau}^{i,d}=c_{(i-1)\cdot 2^d: i\cdot 2^d}$ is the sum of costs from $s^{i,d}$ to $g^{i,d}$ of $\tau$.
Furthermore, let the baseline $b^{i,d}=b(s^{i,d}, g^{i,d})$ be any fixed function $b^{i,d}:S^2\rightarrow R$, then $C_{\tau}^{i,d}$ in \eqref{eq:pg-thm-gradient-short} can be replaced with $C_{\tau}^{i,d} - b^{i,d}$.
\end{theorem}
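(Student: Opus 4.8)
The plan is to establish the two displayed equalities in sequence and then dispatch the baseline claim, all built on the score-function (log-derivative) identity and the recursive factorization \eqref{eq:recursive-trajectory-likelihood}. For the first equality, since $c_{\tau}$ does not depend on $\theta$, I would differentiate the objective \eqref{eq:cost-function-J} and use $\nabla_{\theta}\Pr_{\rho(\pi_{\theta})}[\tau] = \Pr_{\rho(\pi_{\theta})}[\tau]\,\nabla_{\theta}\log\Pr_{\rho(\pi_{\theta})}[\tau]$ to recover $\nabla_{\theta}J(\theta)=\mathbb{E}_{\rho(\pi_{\theta})}[c_{\tau}\,\nabla_{\theta}\log\Pr_{\rho(\pi_{\theta})}[\tau]]$. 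In the discrete-graph case this is a finite sum, so interchanging gradient and summation is immediate; in the continuous case it holds under the usual regularity permitting differentiation under the integral.

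Next, unrolling the recursion \eqref{eq:recursive-trajectory-likelihood} down the tree gives the product factorization $\Pr_{\rho(\pi_{\theta})}[\tau] = \rho_0(s_0,s_T)\prod_{d=1}^{D}\prod_{i=1}^{2^{D-d}}\pi_{\theta}(s_m^{i,d}\mid s^{i,d},g^{i,d})$, so that $\nabla_{\theta}\log\Pr_{\rho(\pi_{\theta})}[\tau]=\sum_{d,i}\nabla_{\theta}\log\pi_{\theta}(s_m^{i,d}\mid s^{i,d},g^{i,d})$, the $\theta$-independent $\rho_0$ term dropping out. Substituting this into the first equality produces exactly the double sum in \eqref{eq:pg-thm-gradient-short}, but with each term weighted by the full cost $c_{\tau}$ rather than the local segment cost $C_{\tau}^{i,d}$. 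The crux is therefore to show that for each node $(i,d)$ the surplus $c_{\tau}-C_{\tau}^{i,d}$, namely the costs accrued strictly outside the segment $[s^{i,d},g^{i,d}]$, makes no contribution.

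To prove this I would condition on the $\sigma$-algebra generated by the two segment endpoints $s^{i,d},g^{i,d}$ together with all states lying outside the segment. The locality assumption built into \eqref{eq:recursive-trajectory-likelihood} --- that predictions inside a segment depend only on its endpoints --- implies that, given the endpoints, the interior states are conditionally independent of the exterior ones, and that the conditional marginal of the first interior sub-goal $s_m^{i,d}$ is exactly $\pi_{\theta}(\cdot\mid s^{i,d},g^{i,d})$. Since $c_{\tau}-C_{\tau}^{i,d}$ is measurable with respect to this conditioning, the inner conditional expectation factors as $(c_{\tau}-C_{\tau}^{i,d})\cdot\mathbb{E}_{s_m^{i,d}\sim\pi_{\theta}(\cdot\mid s^{i,d},g^{i,d})}[\nabla_{\theta}\log\pi_{\theta}(s_m^{i,d}\mid s^{i,d},g^{i,d})]$, and the score-function factor integrates to $\nabla_{\theta}\int\pi_{\theta}(\cdot\mid s^{i,d},g^{i,d})=\nabla_{\theta}1=0$. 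The tower property then erases every cross term, leaving $C_{\tau}^{i,d}$ in place of $c_{\tau}$ and establishing the second equality. The baseline claim follows by the identical cancellation: $b^{i,d}=b(s^{i,d},g^{i,d})$ depends only on the endpoints, hence is measurable with respect to the same conditioning, so $\mathbb{E}[b^{i,d}\,\nabla_{\theta}\log\pi_{\theta}(s_m^{i,d}\mid s^{i,d},g^{i,d})]=\mathbb{E}[b^{i,d}\cdot 0]=0$, and subtracting it from $C_{\tau}^{i,d}$ leaves the gradient unchanged.

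I expect the causality argument to be the main obstacle. The naive move of conditioning on \emph{all other} sub-goals fails, because $s_m^{i,d}$ simultaneously serves as the endpoint of its two child nodes and therefore appears inside their policy factors; the correct move is to condition on the segment endpoints plus the entire exterior and to invoke the conditional independence of interior and exterior that the recursive (Markov-like) structure of \eqref{eq:recursive-trajectory-likelihood} supplies. Pinning down this conditioning and the attendant measurability precisely is the technical heart of the proof, while the remaining algebra is routine.
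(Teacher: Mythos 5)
Your proposal is correct and takes essentially the same route as the paper's supplementary proof: the same unrolled product factorization of $\Pr_{\rho(\pi_{\theta})}[\tau]$ over tree nodes, the same log-derivative identity for the first equality, and the same conditioning on the segment endpoints together with all exterior states so that the exterior cost (or baseline) factors out and the score term $\mathbb{E}\left[\nabla_{\theta}\log\pi_{\theta}\left(s_m^{i,d}\mid s^{i,d},g^{i,d}\right)\right]$ vanishes. If anything, you are more explicit than the paper about the key measure-theoretic point --- that the conditional law of $s_m^{i,d}$ given this conditioning is exactly $\pi_{\theta}(\cdot\mid s^{i,d},g^{i,d})$, which rests on the conditional independence of interior and exterior states supplied by the recursive structure --- a fact the paper invokes only implicitly when it declares the exterior costs ``constants.''
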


The main difference between Theorem \ref{thm:sgt-pg} and the standard PG theorem~\citep{sutton2000policy} is in the calculation of $\nabla_{\theta}\log{\Pr_{\rho(\pi_{\theta})}[\tau]}$, which in our case builds on the tree-based construction of the SGT trajectory.
The summations over $i$ and $d$ in Theorem~\ref{thm:sgt-pg}, are used to explicitly state individual sub-goal predictions: $d$ iterates over the depth of the tree, and $i$ iterates between sub-goals of the same depth.

Using Theorem~\ref{thm:sgt-pg}, we can sample a trajectory $\tau$ using $\pi_{\theta}$, obtain its cost $c_{\tau}$, and estimate of the gradient of $J(\theta)$ using $c_{\tau}$ and the gradients of individual decisions of $\pi_{\theta}$. 

In the policy gradients literature, variance reduction using control variates, and trust-region methods such as TRPO and PPO play a critical role~\citep{greensmith2004variance,schulman2015trust,schulman2017proximal}. 
The baseline reduction in Theorem~\ref{thm:sgt-pg} allows similar results to be derived for SGT, and we similarly find them to be important in practice. Due to space constraints, we report these in the
supplementary.

\subsection{The SGT-PG Algorithm}

\begin{algorithm}[htp]\caption{SGT-PG}
  \SetAlgoLined\DontPrintSemicolon
  \SetKwProg{myalg}{Algorithm}{}{}
  \SetKwFunction{collect}{collect}
  \SetKwFunction{computePG}{compute-PG}
  \SetKwFunction{optimizerStep}{optimizer.step}
  \SetKwProg{myproc}{Procedure}{}{}
  \SetKwFunction{predictSubgoals}{predict-subgoals}
  \SetKwFunction{eEval}{evaluate}
  \label{alg:sgt-pg}
  \myalg{ }{
  \setcounter{AlgoLine}{0}
  \nl Input: $D$ - depth, $N$ - episodes per cycle, $E$ - environment \\
  \nl init $\pi_1,\dots, \pi_D$ with parameters $\theta_1,\dots,\theta_D$ \\
  \nl \For{$d: 1...D$}{
  \nl \If{$d>1$}{
    \nl $\theta_d \leftarrow \theta_{d-1}$ \tcp{init $\pi_d$ from $\pi_{d-1}$}
  }
  \nl \While{convergence criterion for $\pi_d$ not met 
  }{
   \nl $\mathbb{D}= \{(\tau_i,c_{\tau_i}) \}_{i=1}^N \leftarrow$\collect{$d, N, E$} \\
   \nl $\nabla_{\theta_d}J(\theta_d)\leftarrow$\computePG{$\theta_d, \mathbb{D}$}\\
   \nl $\theta_d\leftarrow$\optimizerStep{$\theta_d,  \nabla_{\theta_d}J(\theta_d)$}
  }  }  }
  \setcounter{AlgoLine}{0}
  \myproc{\collect{$d$, $N$, $E$}}{
  \nl \For{$i: 1\dots N$}{
  \nl $(s^i,g^i)\sim \rho_0$ \\
  \nl $s_1^i,\dots,s_{2^d-1}^i\leftarrow$\predictSubgoals{$s^i,g^i,d$}\\
  \nl $\tau_i=[s^i, s_1^i,\dots,s_{2^d-1}^i, g^i]$ \\
  \nl $c_{\tau_i}=[c_{j:j+1} ]_{j=1}^{2^d-1}\leftarrow E.$\eEval{$\tau_i$}
  }
  \nl \Return $\{(\tau_i, c_{\tau_i) } \}_{i=1}^N$
  }
\end{algorithm}

Following the theoretical results in Section \ref{sec:SGT_DP}, where the greedy SGT trajectory used a
different value function to predict sub-goals at different depths of the tree, we can expect a stochastic SGT policy that depends on the depth in the tree ($d$ in Theorem \ref{thm:sgt-pg}) to perform better than a depth-independent policy. Theorem~\ref{thm:sgt-pg} holds true when $\pi$ depends on $d$, similarly to a time-dependent policy in standard PG literature. We denote such a depth dependent policy as $\pi_d$, for $d\in 1,\dots,D$.
At depth $d=0$, no sub-goal is predicted resulting in $(s,g)$ being directly connected. 
Next, $\pi_1$ predicts a sub-goal $s_m$, segmenting the trajectory to two segments of depth 0: $(s,s_m)$ and $(s_m,g)$.
The recursive construction continues, $\pi_2$ predicts a sub-goal and calls $\pi_1$ on the resulting segments, and so on until depth $D$.

An observation that we found important for improving training stability, is that the policies can be trained sequentially. Namely, we first train the $d$-depth policy, and only then start training $d+1$-depth policy, while freezing all policies of depth $\leq d$. 

Our algorithm implementation, SGT-PG, is detailed in Algorithm~\ref{alg:sgt-pg}.
SGT-PG predicts sub-goals, and interacts with an environment $E$ that evaluates the cost of segments $(s, s')$.
SGT-PG maintains $D$ depth-specific policies:
$\{\pi_i\}_{i=1}^D$
, each  parametrized by 
$\{\theta_i \}_{i=1}^D$
respectively (i.e., we do not share parameters for policies at different depths, though this is possible in principle).
The policies are trained in sequence, and every training cycle is comprised of on-policy data collection followed by policy update.
The \collect method collects on-policy data given $N$, the number of episodes to generate; $d$, the index of the policy being trained; and $E$, the environment.
We found that for reducing noise when training $\pi_d$, it is best to sample from $\pi_d$ only and take the mean predictions of 
$\{\pi_i\}_{i=1}^{d-1}$.
The \computePG method, based on Eq.~\eqref{eq:pg-thm-gradient-short}, uses the collected data $\mathbb{D}$, and estimates $\nabla_{\theta_d}J(\theta_d)$. 
We found that similar to sequential RL, adding a trust region using the PPO optimization objective~\citep{schulman2017proximal} provides stable updates (see Section~\ref{sec:nn-arch} for specific loss function).
Finally, \optimizerStep updates $\theta_d$ according to any SGD optimizer algorithm, which completes a single training cycle for $\pi_d$.
We proceed to train the next policy $\pi_{d+1}$ when a pre-specified convergence criterion is met, for instance, until the expected cost of predicted trajectories $\mathbb{E}\left[ c_{\tau} \right]$ stops decreasing.

\section{Experiments}
In this section we compare our \textit{SGT} approach with the conventional \textit{sequential} method (i.e. predicting the next state of the trajectory).
We consider APSP problems inspired
by
robotic motion planning, where the goal is to find a collision-free trajectory between a pair of start and goal states.
We illustrate the SGT value functions and trajectories on a simple 2D point robot domain, which we solve using Fitted SGTDP (Section \ref{sec:batch-rl-experiments}, code: \texttt{https://github.com/tomjur/SGT\_batch\_RL}\linebreak\texttt{.git}).
We then consider a more challenging domain with a simulated 7DoF robotic arm, and demonstrate the effectiveness of SGT-PG (Section \ref{sec:sgt-pg-experiments} code: \texttt{https://github.com/tomjur/SGT-PG.git}).


\begin{figure}
\centering
    \hfill
    \begin{subfigure}[b]{0.45\linewidth}
    \includegraphics[width=1.0\textwidth]{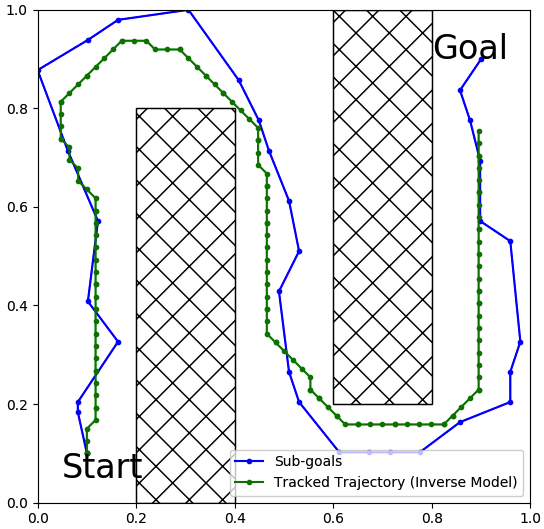}
    \caption{}
    \label{fig:rl}
    \end{subfigure}
    \hfill
    \begin{subfigure}[b]{0.45\linewidth}
    \includegraphics[width=\textwidth]{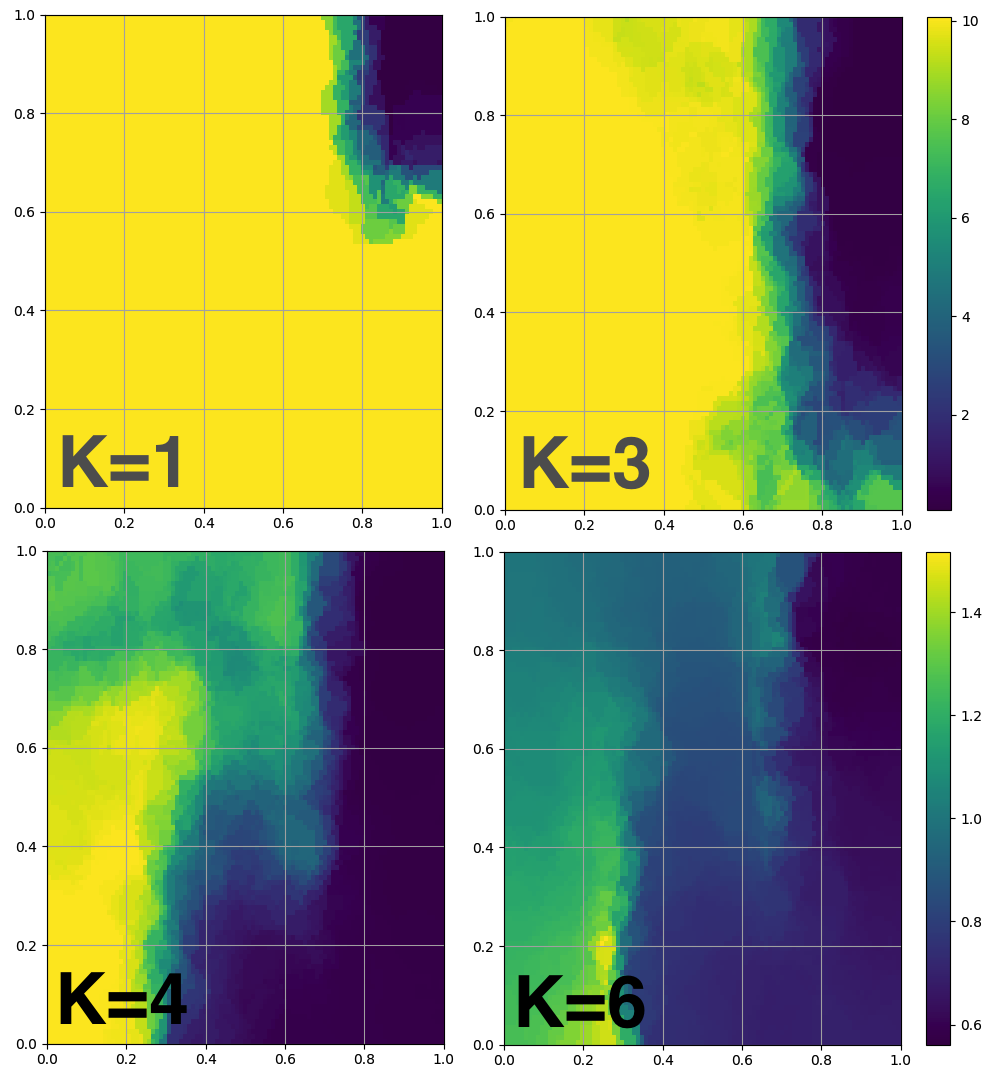}
    \caption{}
    \label{fig:rl_values}
    \end{subfigure}
    \hfill
    \vspace{-12px}
\caption{
Batch RL experiment. (a) A robot needs to navigate between the (hatched) obstacles. Blue - SGT prediction, green - trajectory tracking sub-goals using an inverse model. 
(b) Approximate values $\hat{V}_k(s,g=[0.9,0.9])$ for several values of $k$. Note how the reachable region to the goal (non-yellow) grows with $k$.
}
\label{fig:scenarios}
\vspace{-5mm}
\end{figure}
\begin{figure*}
\centering
    \hfill
    \begin{subfigure}[b]{0.2\linewidth}
    \includegraphics[width=\textwidth]{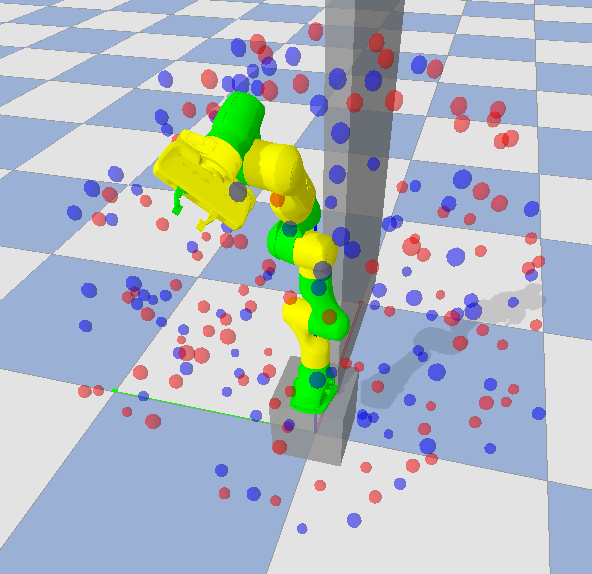}
    \caption{}
    \label{fig:robot-start-goals}
    \end{subfigure}
    \hfill
    \begin{subfigure}[b]{0.73\linewidth}
    \includegraphics[width=\textwidth]{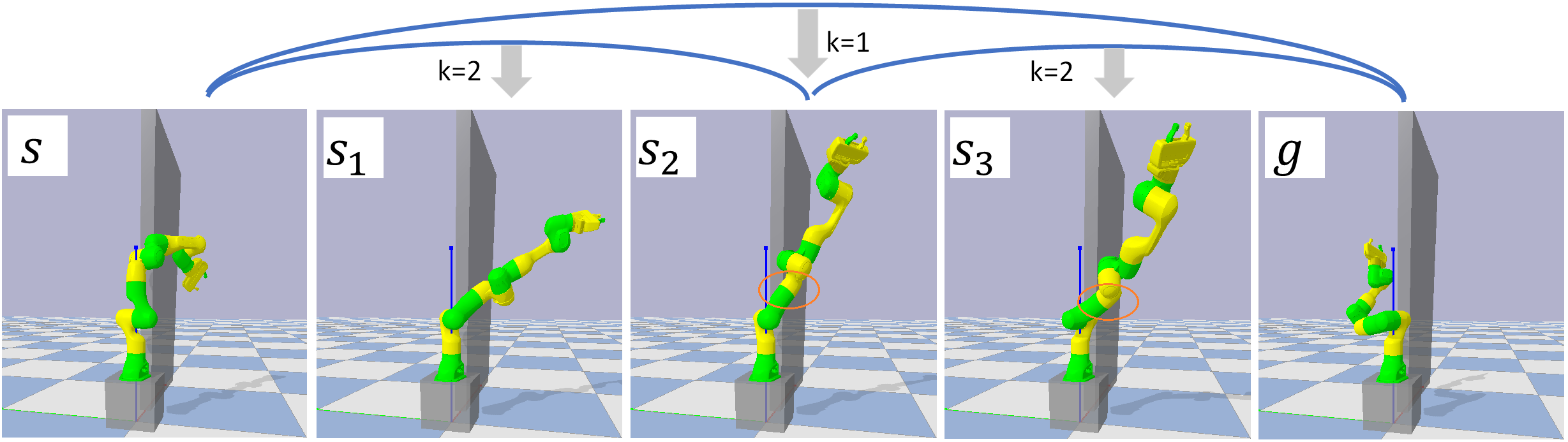}
    \caption{}
    \label{fig:robot-track}
    \end{subfigure}
    \hfill
    \vspace{-13px}
    \caption{
    Motion planning with \textit{SGT-PG}.
    (a) Illustration of start (red) and goal (blue) positions of the end effector in \textit{wall} domain.
    (b) SGT Trajectory. 
    Note a non-trivial rotation between $s_2$ to $s_3$ (marked with orange circles) allowing linear motion from $s_3$ to $g$. 
    In the supplementary material we also show a 7 sub-goals path.
}
\label{fig:nmp}
\end{figure*}

\subsection{Fitted SGTDP Experiments}\label{sec:batch-rl-experiments}
We start by evaluating the Fitted SGTDP algorithm. We consider a 2D particle moving in an environment with obstacles, as shown in Figure~\ref{fig:rl}. The particle can move a distance of $0.025$ in one of the eight directions, and suffers a constant cost of $0.025$ in free space, and a large cost of $10$ on collisions. Its task is reaching from any starting point to within a $0.15$ distance of any goal point without collision. 
This simple domain is a continuous-state optimal control Problem~\eqref{eq:opt_control}, and for distant start and goal points, as shown in Figure~\ref{fig:rl}, it requires relatively long-horizoned planning, making it suitable for studying batch RL algorithms.

To generate data, we sampled states and actions uniformly and independently, resulting in $125$K $(s,u,c,s')$ tuples. As for function approximation, we opted for simplicity, and used K-nearest neighbors (KNN) for all our experiments, with K$_{neighbors}=5$. To solve the minimization over states in Fitted SGTDP, we discretized the state space and searched over a $50\times 50$ grid of points.

A natural baseline in this setting is FQI~\cite{ernst2005tree, riedmiller2005neural}. We  verified that for a fixed goal, FQI obtains near perfect results with our data. Then, to make it goal-conditioned, we used a universal Q-function~\cite{schaul2015universal}, requiring only a minor change in the algorithm (see supplementary for pseudo-code). 

To evaluate the different methods, we randomly chose 200 start and goal points, and measured the distance from the goal the policies reach, and whether they collide with obstacles along the way. For FQI, we used the greedy policy with respect to the learned Q function. The approximate SGTDP method, however, does not automatically provide a policy, but only a state trajectory to follow. Thus, we experimented with two methods for extracting a policy from the learned sub-goal tree. The first is training an inverse model $f_{IM}(s,s')$ -- a mapping from $s,s'$ to $u$, using our data, and the same KNN function approximation. To reach a sub-goal $g$ from state $s$, we simply run $f_{IM}(s,g)$ until we are close enough to $g$ (we set the threshold to $0.15$). An alternative method is first using FQI to learn a goal-based policy, as described above, and then running this policy on the sub-goals. The idea here is that the sub-goals learned by approximate SGTDP can help FQI overcome the long-horizon planning required in this task.
Note that \emph{all methods use exactly the same data, and the same function approximation} (KNN), making for a fair comparison.

In Table \ref{tab:RL-experiments_results} we report our results. 
FQI only succeed in reaching the very closest goals, resulting in a high average distance to goal.
Fitted SGTDP (approx. SGTDP+IM), on the other hand, computed meaningful sub-goals for almost all test cases, resulting in a low average distance to goal when tracked by the inverse model. Figure~\ref{fig:scenarios} shows an example sub-goal tree and a corresponding tracked trajectory. The FQI policy did learn not to hit obstacles, resulting in the lowest collision rate. This is expected, as colliding leads to an immediate high cost, while the inverse model is not trained to take cost into account. Interestingly, combining the FQI policy with the sub-goals improves both long-horizon planning and short horizoned collision avoidance. In Figure \ref{fig:rl_values} we plot the approximate value function $\hat{V}_k$ for different $k$ and a specific goal at the top right corner. Note how the reachable parts of the state space to the goal expand with $k$.
\begin{table}[]
\centering
\begin{tabular}{c|c|c}
                    & \begin{tabular}[c]{@{}c@{}}Avg. Distance\\ to Goal\end{tabular} & \begin{tabular}[c]{@{}c@{}}Avg. Collision\\ Rate\end{tabular} \\ \hline
approx. SGTDP + IM  & 0.13                                                            & 0.25                                                          \\
approx. SGTDP + FQI & 0.29                                                            & 0.06                                                          \\
FQI                 & 0.58                                                            & 0.02                                                         
\end{tabular}
\caption{Results for controllers of batch-RL experiments.}
\label{tab:RL-experiments_results}
\vspace{-20px}
\end{table}

\subsection{Neural Motion Planning}\label{sec:sgt-pg-experiments}



An interesting application domain for our approach is neural motion planning~\citep[NMP,][]{qureshi2018motion} -- learning to predict collision-free trajectories for a robot among obstacles. Here, we study NMP for the 7DoF Franka Panda robotic arm. Due to lack of space, full technical details of this section appear in supplementary Section \ref{sec:nmp-technical-details}. 


We follow an RL approach to NMP~\cite{jurgenson2019harnessing}, using a cost function that incentivizes short, collision-free trajectories that reach the goal. 
The state space is the robot's 7 joint angles, and actions correspond to predicting the next state in the plan. Given the next predicted state, the robot moves by running a fixed PID controller in simulation, tracking a linear joint motion from current to next state, and a cost is incurred based on the resulting motion.



We formulate NMP as approximate APSP as follows. A model predicts $T-1$ sub-goals, resulting in $T$ motion segments. 
Those segments are executed and evaluated \textit{independently}, i.e. when evaluating segment $(s, s')$ the robot first resets to $s$, and the resulting cost is based on its travel to $s'$.

Our experiments include two scenarios: \textit{self-collision} and \textit{wall}.
In \textit{self-collision}, there are no obstacles and the challenge is to generate a minimal distance path while avoiding self-collisions between the robot links. 
The more challenging \textit{wall} workspace contains a wall that partitions the space in front of the robot (see Figure~\ref{fig:nmp}). 
In \textit{wall}, the shortest path is often nonmyopic, requiring to first move \textit{away} from the goal in order to pass the obstacle.

    
We compare \textit{SGT-PG} with a sequential baseline, \textit{Sequential sub-goals (SeqSG)}, which prescribes the sub-goals predictions \textit{sequentially}.
For appropriate comparison, both models use a PPO objective~\citep{schulman2017proximal}, and a fixed architecture neural network to model the policy. All other hyper-parameters were specifically tuned for each model. 

\begin{table}[]
\centering
\begin{tabular}{c|l|c|c}
Model                   & \begin{tabular}[c]{@{}l@{}}\# Sub-\\ goals\end{tabular} & \textit{\begin{tabular}[c]{@{}c@{}}self-\\ collision\end{tabular}} & \textit{wall}  \\ \hline
\multirow{3}{*}{SGT-PG} & 1                                                      & \textbf{1.$\pm$0.}                                                      & 0.896$\pm$0.016          \\
                        & 3                                                      & \textbf{1.$\pm$0.}                                                      & \textbf{0.973$\pm$0.007} \\
                        & 7                                                      & 0.996$\pm$0.007                                                         & \textbf{0.973$\pm$0.007} \\ \hline
\multirow{3}{*}{SeqSG}  & 1                                                      & \textbf{1.$\pm$0.}                                                      & 0.676$\pm$0.034          \\
                        & 3                                                      & 0.983$\pm$0.007                                                         & 0.593$\pm$0.047          \\
                        & 7                                                      & 0.88$\pm$0.03                                                           & 0.487$\pm$0.037         
\end{tabular}
\caption{Success rates for the NMP scenarios.}
\label{tab:sgt-pg-results}
\vspace{-20px}
\end{table}

Table~\ref{tab:sgt-pg-results} compares \textit{SGT-PG} and \textit{SeqSG}, on predictions of 1, 3, and 7 sub-goals.
We evaluate success rate (reaching goal without collision) on 100 random start-goal pairs that were held out during training (see Figure \ref{fig:nmp}a).
Each experiment was repeated 3 times and the mean and range are reported.
Note that only 1 sub-goal is required to solve \textit{self-collision}, and both models obtain perfect scores. For \textit{wall}, on the other hand, more sub-goals are required, and here SGT significantly outperforms \textit{SeqSG}, which was not able to accurately predict several sub-goals.


\section{Conclusion}
We presented a framework for multi-goal RL that is derived from a novel first principle -- the SGT dynamic programming equation. For deterministic domains, we showed that SGTs are less prone to drift due to approximation errors, reducing error accumulation from $O(N^2)$ to $O(N \log N)$. We further developed value-based and policy gradient RL algorithms for SGTs, and demonstrated that, in line with their theoretical advantages, SGTs demonstrate improved performance in practice.

Our work opens exciting directions for future research, including: (1) can our approach be extended to stochastic environments? (2) how to explore effectively based on SGTs? (3) can SGTs be extended to image-based tasks? Finally, we believe that our ideas will be important for robotics and autonomous driving applications, and other domains where goal-based predictions are important.

\section*{Acknowledgements}
This work is partly funded by the Israel Science Foundation (ISF-759/19) and the Open Philanthropy Project Fund, an advised fund of Silicon Valley Community Foundation. 
Finally, the authors would like to thank Orr Krupnik for helpful comments.

\bibliography{example_paper}
\bibliographystyle{icml2020}

\appendix
\onecolumn
\section{Proofs}
\textbf{Proof of Theorem \ref{thm:SGTDP}}

\begin{proof}
First, by definition, the shortest path from $s$ to itself is $0$. In the following, therefore, we assume that $s\neq s'$. 

We show by induction that each $V_k(s,s')$ in Algorithm \eqref{eq:DP_trajsplit} is the cost of the shortest path from $s$ to $s'$ in $2^k$ steps or less.

Let $\tau_k(s,s')$ denote a shortest path from $s$ to $s'$ in $2^k$ steps or less, and let $V_k(s,s')$ denote its corresponding cost. 
Our induction hypothesis is that $V_{k-1}(s, s')$ is the cost of the shortest path from $s$ to $s'$ in $2^{k-1}$ steps or less. We will show that $V_k(s, s') = \min_{s_m} \left\{ V_{k-1}(s, s_m) + V_{k-1}(s_m, s')\right\}$.

Assume by contradiction that there was some $s^*$ such that $V_{k-1}(s, s^*) + V_{k-1}(s^*, s') < V_k(s,s')$. Then, the concatenated trajectory $[\tau_{k-1}(s,s^*), \tau_{k-1}(s^*,s')]$ would have $2^k$ steps or less,
contradicting the fact that $\tau_k(s,s')$ is a shortest path from $s$ to $s'$ in $2^k$ steps or less. So we have that $V_k(s, s') \leq \left\{ V_{k-1}(s, s_m) + V_{k-1}(s_m, s')\right\} \quad \forall s_m$. 
Since the graph is complete, $\tau_k(s,s')$ can be split into two trajectories of length $2^{k-1}$ steps or less. Let $s_m$ be a midpoint in such a split. 
Then we have that $V_k(s, s') = \left\{ V_{k-1}(s, s_m) + V_{k-1}(s_m, s')\right\}$. 
So equality can be obtained and thus we must have $V_k(s, s') = \min_{s_m} \left\{ V_{k-1}(s, s_m) + V_{k-1}(s_m, s')\right\}$.

To complete the induction argument, we need to show that $V_0(s, s') = c(s, s')$. This holds since for $k=0$, for each $s,s'$, the only possible trajectory between them of length $1$ is the edge $s,s'$. 

Finally, since there are no negative cycles in the graph, for any $s,s'$, the shortest path has at most $N$ steps. Thus, for $k= \log_2 (N)$, we have that  $V_k(s, s')$ is the cost of the shortest path in $N$ steps or less, which is the shortest path between $s$ and $s'$.
\end{proof}



\textbf{Proof of Proposition \ref{prop:err_prop}}
\begin{proof}
The SGTDP Operator T is non-linear and not a contraction, but it is monotonic:
\begin{equation}\label{eq:t_monotonic}
    \forall s,g: V_\alpha(s,g) \leq V_\beta(s,g) \rightarrow \forall s,g: TV_\alpha(s,g) \leq TV_\beta(s,g).
\end{equation}
To show (\ref{eq:t_monotonic}), let $s_m'=\arg\min_{s_m}{\{V_{\beta}(s,s_m) + V_{\beta}(s_m,g)\}}$. Then:
\begin{align*}
    TV_{\alpha}(s,g)&=\min_{s_m}{\{V_{\alpha}(s,s_m) + V_{\alpha}(s_m,g)\}}
    \le V_{\alpha}(s,s_m') + V_{\alpha}(s_m',g)\\
    &\le V_{\beta}(s,s_m') + V_{\beta}(s_m',g)
    =\min_{s_m}{\{V_{\beta}(s,s_m) + V_{\beta}(s_m,g)\}}
    =TV_{\beta}(s,g)
\end{align*}

Back to the proof. By denoting $e=(1,1,1,1,...,1)$ we can write (\ref{eq:error}) as: \begin{align*}
    V_0 - e\epsilon \leq \hat{V_0} \leq V_0 + e\epsilon
\end{align*}
Since T is monotonic, we can apply it on the inequalities:
\begin{align*}
    T(V_0 - e\epsilon) \leq T\hat{V_0} \leq T(V_0 + e\epsilon)
\end{align*}
Focusing on the left-hand side expression (the right-hand side is symmetric) we obtain:
\begin{align*}
    T(V_0 - e\epsilon)(s,s') = \min_{s_m} \left\{ (V_0 - e\epsilon)(s, s_m) + (V_0 - e\epsilon )(s_m, s')\right\} = \min_{s_m} \left\{ V_0(s, s_m) + V_0(s_m, s') - 2\epsilon \right\} = TV_0(s,s') - 2\epsilon
\end{align*}
leading to:
\begin{align*}
    TV_0 - 2e\epsilon \leq T\hat{V_0} \leq TV_0 + 2e\epsilon
\end{align*}
Using (\ref{eq:error}):
\begin{align*}
    TV_0 - (2+1)e\epsilon \leq T\hat{V_0} - e\epsilon \leq \hat{V_1} \leq T\hat{V_0} + e\epsilon \leq TV_0 + (2+1)e\epsilon
\end{align*}
Proceeding similarly we obtain for $ \hat{V_2} $:
\begin{align*}
    T^2V_0 - [2(2+1)+1]e\epsilon \leq \hat{V_2} \leq T^2V_0 + [2(2+1)+1]e\epsilon
\end{align*}
And for every $ k \geq 1 $:
\begin{align*}
    T^kV_0 - (2^{k+1}-1)e\epsilon \leq \hat{V_k} \leq T^kV_0 + (2^{k+1}-1)e\epsilon
\end{align*}
\begin{equation}\label{v_k_bound}
    \| \hat{V}_k - V_k \|_\infty = \| \hat{V}_k - T^{k}V_0 \|_\infty \leq (2^{k+1}-1)\epsilon
\end{equation}
For $ k = \log_2{N} $ we obtain:
\begin{align*}
    \| \hat{V}_{\log_2{N}} - V^* \|_\infty = \| \hat{V}_{\log_2{N}} - V_{\log_2{N}} \|_\infty \leq \epsilon(2N-1)
\end{align*}
\end{proof}

\textbf{Proof of Proposition \ref{prop:STDP_error}}

\begin{proof}
For every iteration $k$, the middle state of any greedy SGT path with length $2^k$ is $ \hat{s}_m = \argmin_{s_m} \left\{ \hat{V}_{k-1}(s, s_m) + \hat{V}_{k-1}(s_m, g)\right\}$. Thereby $\hat{s}_m $ fulfils the identity $ \hat{V}_{k-1}(s, \hat{s}_m) + \hat{V}_{k-1}(\hat{s}_m, g) = T\hat{V}_{k-1}(s, g)$, for every iteration $k$, where $T$ is the SGT operator. The next two relations then follow:
\begin{align}
    | \hat{V}_{k-1}(s,\hat{s}_m) + \hat{V}_{k-1}(\hat{s}_m,g) - V_k(s,g)| &= | T\hat{V}_{k-1}(s,g) - V_k(s,g) | \leq \nonumber \\
    & \leq | T\hat{V}_{k-1}(s,g) - \hat{V}_k(s,g) | + | \hat{V}_k(s,g) - V_k(s,g) |  \leq \\
    & \leq \epsilon + (2^{k+1}-1) \epsilon  \nonumber= 2^{k+1}\epsilon \nonumber
\end{align}
\begin{align}
    | V_{k-1}(s,\hat{s}_m) + V_{k-1}(\hat{s}_m,g) - V_k(s,g) | \leq & | \hat{V}_{k-1}(s,\hat{s}_m) +  \hat{V}_{k-1}(\hat{s}_m,g) - V_k(s,g) | + \nonumber \\ 
    & + | V_{k-1}(s,\hat{s}_m) - \hat{V}_{k-1}(s,\hat{s}_m) | + \\ 
    & + | V_{k-1}(\hat{s}_m,g) -  \hat{V}_{k-1}(\hat{s}_m,g) | \leq \nonumber \\
    \leq & | \hat{V}_{k-1}(s,\hat{s}_m) + \hat{V}_{k-1}(\hat{s}_m,g) - V_k(s,g) | + 2\cdot(2^{k}-1) \epsilon \nonumber
\end{align}
Combining the two inequalities yields:
\begin{equation}\label{greedy_sgt_proof_sm_inequality}
        | V_{k-1}(s,\hat{s}_m) + V_{k-1}(\hat{s}_m,g) - V_k(s,g) | \leq (2^{k+1} + 2\cdot(2^{k}-1))\epsilon \leq 2^{k+2} \epsilon
\end{equation}
Explicitly writing down relation (\ref{greedy_sgt_proof_sm_inequality}) for all the different sub-paths we obtain:
\begin{alignat*}{4}
    &| V_{\log_2(N/2)}(s_0,s_{N/2}) &&+ V_{\log_2(N/2)}(s_{N/2}, s_N) &&- V_{\log_2(N)}(s_0,s_N) &&| \leq 4N \epsilon \\ \\
    &| V_{\log_2(N/4)}(s_0,s_{N/4}) &&+ V_{\log_2(N/4)}(s_{N/4}, s_{N/2}) &&- V_{\log_2(N/2)}(s_0,s_{N/2}) &&| \leq 2N \epsilon \\
    &| V_{\log_2(N/4)}(s_{N/2},s_{3N/4}) &&+ V_{\log_2(N/4)}(s_{3N/4}, s_N) &&- V_{\log_2(N/2)}(s_{N/2},s_N) &&| \leq 2N \epsilon\\ \\
    &| V_{\log_2(N/8)}(s_0,s_{N/8}) &&+ V_{\log_2(N/8)}(s_{N/8}, s_{N/4}) &&- V_{\log_2(N/4)}(s_0,s_{N/4}) &&| \leq N \epsilon\\
    &| V_{\log_2(N/8)}(s_{N/4},s_{3N/8}) &&+ V_{\log_2(N/8)}(s_{3N/8}, s_{N/2}) &&- V_{\log_2(N/4)}(s_{N/4},s_{N/2}) &&| \leq N \epsilon\\
    &| V_{\log_2(N/8)}(s_{N/2},s_{5N/8}) &&+ V_{\log_2(N/8)}(s_{5N/8}, s_{3N/4}) &&- V_{\log_2(N/4)}(s_{N/2},s_{3N/4}) &&| \leq N \epsilon\\
    &| V_{\log_2(N/8)}(s_{3N/4},s_{7N/8}) &&+ V_{\log_2(N/8)}(s_{7N/8}, s_N) &&- V_{\log_2(N/4)}(s_{3N/4},s_N) &&| \leq N \epsilon \\ \intertext{\hfil \vdots \hfil}
    &| V_0(s_0,s_1) &&+ V_0(s_1, s_2) &&- V_1(s_0,s_2)  &&| \leq 8 \epsilon \\
    &| V_0(s_2,s_3) &&+ V_0(s_3, s_4) &&- V_1(s_2,s_4) &&| \leq 8 \epsilon \\
    \intertext{\hfil \vdots \hfil}
    &| V_0(s_{N-2},s_{N-1}) &&+ V_0(s_{N-1}, s_N) &&- V_1(s_{N-2},s_N) &&| \leq 8 \epsilon
\end{alignat*}
Summing all the inequalities, along with the triangle inequality, lead to the following:
\begin{align*}
| \sum_{i=0}^{N-1} V_0(s_i,s_{i+1}) - V_{\log_2(N)}(s_0,s_N) | \leq  4N \log_2{(N)} \epsilon
\end{align*}
The identities $c(s_i,s_{i+1}) = V_0(s_i, s_{i+1})$ and $ V^*(s_0,s_N) =  V_{\log_2{N}}(s_0,s_N)$ complete the proof.
\end{proof}

Proposition \ref{prop:STDP_error} provides a bound on the approximate shortest path using SGTDP. In contrast, we show that a similar bound for the sequential Bellman approach does not hold. For a start and goal pair $s,g$, and an approximate Bellman value function $\hat{V}^B$, let $s_0^B,\dots,s_N^B$ denote the greedy shortest path according to the Bellman operator, i.e., $s_0 = s, s_N = g$ and for $1 \leq k < N$: $s_{k+1} = \argmin_{s_m} \left\{ c(s_k,s_m) + \hat{V}^B(s_m, g)\right\}$.
Note that the greedy Bellman update is not guaranteed to generate a trajectory that reaches the goal, therefore we add $g$ as the last state in the trajectory, following our convention that the graph is fully connected. 
When evaluating the greedy trajectory error in the sequential approach, we deal with two different implementation cases: Using one approximated value function or $N$ approximated value functions. The next proposition shows that when using one value function the greedy trajectory can be arbitrarily bad, regardless of $\epsilon$. Propositions (\ref{prop:Bellman_multiple_v_error}) and (\ref{prop:Bellman_multiple_v_error_example}) show that the error of the greedy trajectory, using $N$ value functions, have a tight $\mathcal{O}(N^2)$ bound. 

\begin{proposition}\label{prop:Bellman_error}
For any $\epsilon,\beta$, there exists a graph and an approximate value function $\hat{V}^B$  satisfying $\| \hat{V}^B - V^* \|_\infty \leq \epsilon N$, such that $\sum_{i=0}^{N-1} c(s_i,s_{i+1}) \geq V^*(s,g) + \beta$.
\end{proposition}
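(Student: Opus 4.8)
The plan is to exhibit an explicit counterexample: a small \emph{trap} gadget padded with harmless dummy nodes to inflate the node count $N$, together with a value function whose error is concentrated on a single entry. The intuition is that a single Bellman value function carries no notion of remaining horizon, so one under-estimation error can make an expensive dead end look cheaper than the true optimum, and the greedy rule then commits to it irrevocably.

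Concretely, I would build a graph whose three essential nodes are the start $s$, a trap $t$, and the goal $g$. I set $c(s,g)=1$ so that $V^*(s,g)=1$, and I make the trap satisfy $c(s,t)=0$, $c(t,g)=M$, with $c(t,s)$ large, so that $V^*(t,g)=M$ for a parameter $M$ fixed later. I then define $\hat V^B$ to equal $V^*$ on every pair except $(t,g)$, where I put $\hat V^B(t,g)=M-\epsilon N$; this makes $\|\hat V^B-V^*\|_\infty=\epsilon N$ exactly, so the hypothesis holds. The key computation is to verify the greedy rule. At $s$, moving to $t$ evaluates to $c(s,t)+\hat V^B(t,g)=M-\epsilon N$, which beats the direct option $c(s,g)+\hat V^B(g,g)=1$ precisely when $M<1+\epsilon N$; and once at $t$, the free self-loop ($c(t,t)=0$) evaluates to $M-\epsilon N$, strictly below the exit value $c(t,g)+\hat V^B(g,g)=M$, so the greedy trajectory remains at $t$ until $g$ is appended as the final state. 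The resulting trajectory $s,t,t,\dots,t,g$ has total cost $c(s,t)+c(t,g)=M$, whose excess over the optimum $V^*(s,g)=1$ is $M-1$.

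It then remains to choose parameters so that $M-1\ge\beta$ while preserving $M<1+\epsilon N$. Since $N$ is the number of nodes, I would pad the gadget with $N-3$ dummy nodes connected to everything by a huge finite cost $H\gg M$, and set $\hat V^B$ exactly on all pairs involving them; this leaves $V^*(s,g)$, $V^*(t,g)$, and every greedy argmin untouched while letting me take $N$ as large as I like. Choosing $N>\beta/\epsilon$ and $M=\beta+1$ then satisfies both constraints simultaneously, which completes the construction.

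The main obstacle I anticipate is not the gadget but the bookkeeping around the error budget: the greedy rule is only fooled while the trap looks cheaper than the true optimum, i.e.\ $M<1+\epsilon N$, yet we want $M$ as large as $\beta+1$; reconciling these forces $N$ to grow like $\beta/\epsilon$, which is exactly why padding is needed and why the statement must allow the graph (and hence $N$) to depend on both $\epsilon$ and $\beta$. I would also check the degenerate options carefully --- self-loops at $s$ and $t$, edges into the dummy nodes, and the sign of $\hat V^B(t,g)=M-\epsilon N$ --- to confirm that the greedy argmin genuinely selects the trap and then stays there until $g$ is forced.
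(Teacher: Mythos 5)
Your construction is correct and rests on essentially the same mechanism as the paper's proof: a time-independent approximate value function with an under-estimation error traps the greedy policy in a zero-cost self-loop until the horizon expires, forcing an expensive final transition to the goal. The only real difference is cosmetic --- the paper realizes the expensive exit as an infinite-cost (missing) edge from the loop state to $g$, so a single graph works for every $\beta$ at once, whereas you keep all costs finite and tune $M=\beta+1$ with $N>\beta/\epsilon$ (plus dummy padding), which is equally valid.
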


\begin{proof}
We show a graph example (Figure \ref{fig:graph_bellman_one_v_not_reaching}), where a $\epsilon$-approximate Bellman value function $\hat{V}^{B}$ might form the path $s_0, s_0, \dots, s_0, g$ with a total cost of infinity. (Reminder: The graph is fully connected, all the non-drawn edges have infinity cost), while the optimal path $s_0, s_1, \dots, s_{N-1}, g$ has a total cost of $N\epsilon$.
\newline
As $\hat{V}^{B}$ is independent of the current time-step $k$, and the optimal values for $s$ and $s_1$ are $N \epsilon$ and $(N-1) \epsilon$ respectively, due to approximation error $\hat{V}^{B}$ (for every time-step) might suggest that the value of $s_0$ is lower than the value of $s_1$: $\hat{V}^{B}(s_0,g) -N\epsilon \leq \hat{V}^{B}(s_1,g) + N\epsilon$.
The resulting policy will choose to stay in $s_0$ for the first $N-1$ steps. Since the maximum path length is $N$ and the path must end at the goal, the last step will be directly from $s_0$ to $g$, resulting in a cost of infinity.

\end{proof}
\begin{figure*}
\centering
\includegraphics[width=1.0\textwidth]{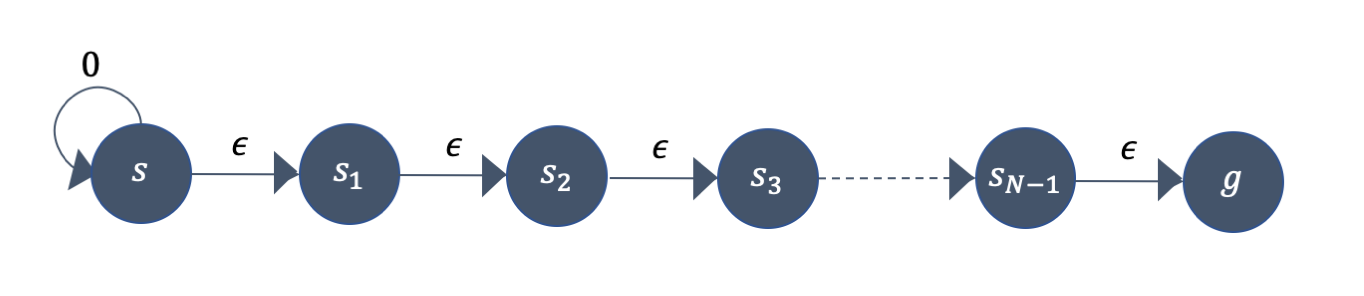}
\caption{Example graph with a start and goal states $s,g$ where the optimal path has a total cost of $N \epsilon$ while $\epsilon$-approximated Bellman might form a path with total cost of infinity}
\label{fig:graph_bellman_one_v_not_reaching}
\vspace{-5mm}
\end{figure*}

\begin{figure*}
\centering
\includegraphics[width=1.0\textwidth]{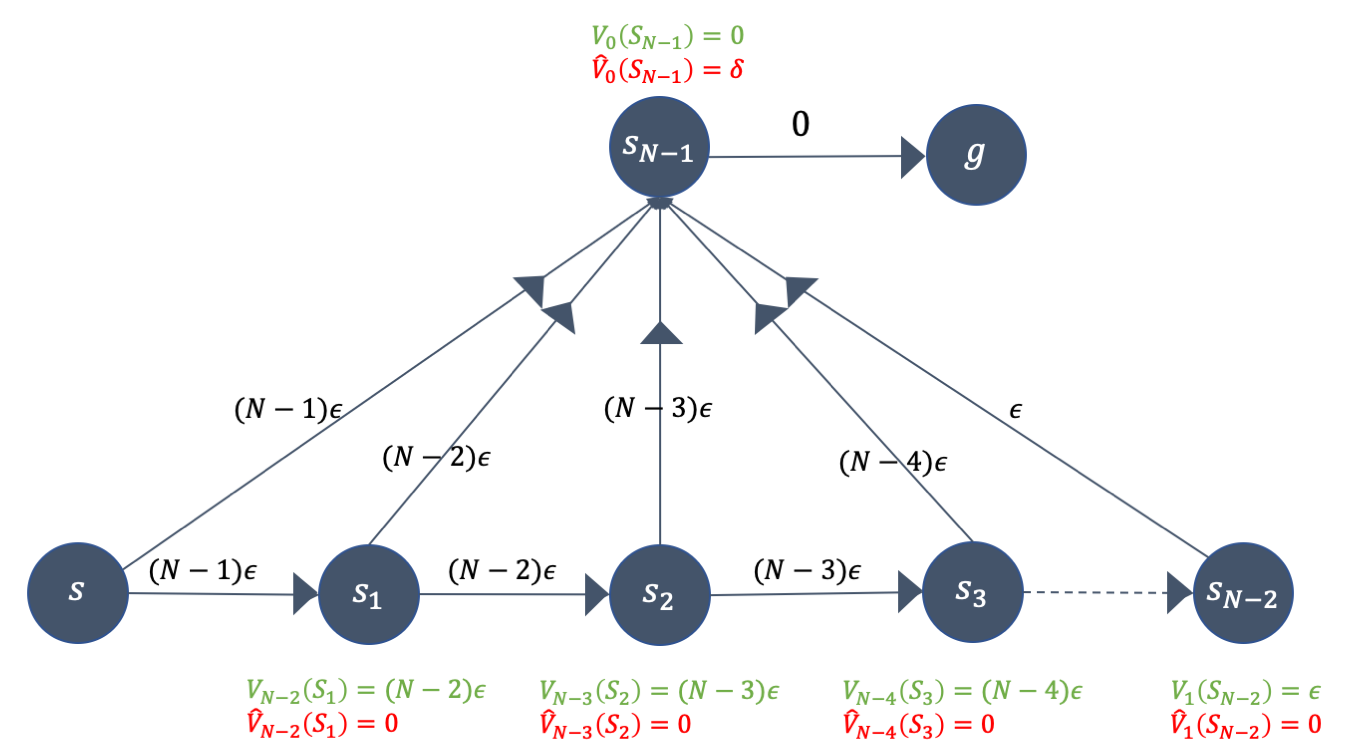}
\caption{Example graph with a start and goal states $s,g$. The edges costs are marked in black labels, the Bellman value function at iteration k is marked in green and the $\epsilon$-approximated Bellman value is marked in red. The optimal path has a total cost of $V^*(s,g)=(N-1)\epsilon$ while the $\epsilon$-approximated Bellman might form the path going through all the states with a total cost of ${N^2-N}\epsilon/2 = V^*(s,g) + \mathcal{O}(N^2)$.}
\label{fig:graph_bellman_multiple_v}
\vspace{-5mm}
\end{figure*}

\begin{proposition}\label{prop:Bellman_multiple_v_error}
For a finite state graph, start and goal pair $s,g$, and sequence of $\epsilon$-approximate Bellman value functions $\hat{V}^{B}_0,\dots, \hat{V}^{B}_{N}$ satisfying $\| \hat{V}^{B}_{k+1} - T^B \hat{V}^{B}_{k} \|_\infty \leq \epsilon $ and $\| \hat{V}^{B}_0 - V_0 \|_\infty \leq \epsilon $, let $s_0,\dots,s_N$ denote the greedy Bellman path, that is:
$s_0 = s, s_N = g, s_{k+1} = \argmin_{s_{k+1}} c(s_k, s_{k+1}) + \hat{V}^{B}_{N-k-2}(s_{k+1}, g),$ etc.
We have that $\sum_{i=0}^{N-1} c(s_i,s_{i+1}) \leq V^*(s,g) + (N^2 - N) \epsilon = V^*(s,g) + \mathcal{O}(N^2) $.
\end{proposition}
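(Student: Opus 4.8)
The plan is to mirror the two-stage structure of the SGTDP proofs (Proposition~\ref{prop:err_prop} for value-function error, then Proposition~\ref{prop:STDP_error} for greedy-trajectory drift), but adapted to the finite-horizon Bellman operator $(T^B V)(s,g) = \min_{s_m}\{c(s,s_m) + V(s_m,g)\}$. First I would introduce the exact finite-horizon Bellman values $V^B_k = (T^B)^k V_0$, noting that $V^B_k(s,g)$ is the shortest path from $s$ to $g$ using at most $k+1$ edges, so $V^B_{N-1} = V^*$ since no negative cycles forces optimal paths to have at most $N$ edges. The first key step is the analogue of Eq.~\eqref{v_k_bound}: $\| \hat{V}^B_k - V^B_k \|_\infty \leq (k+1)\epsilon$. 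The crucial difference from SGTDP is that $T^B$ has a \emph{single} recursive term rather than two, so it is non-expansive in $\|\cdot\|_\infty$ and shifts a uniform perturbation $e\epsilon$ by exactly $\epsilon$ (not $2\epsilon$); combining this with the recursion error in \eqref{eq:error} and inducting on $k$ yields a bound that grows only \emph{linearly} in $k$.

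The second stage is the per-step drift bound. I would define the true remaining cost-to-go $W_k = V^B_{N-k-1}(s_k,g)$, i.e.\ the shortest path from the greedily reached state $s_k$ to $g$ within the remaining budget of $N-k$ edges; in particular $W_0 = V^*(s,g)$ and $W_{N-1} = c(s_{N-1},g)$. The core inequality bounds the single-step regret. Using the greedy definition of $s_{k+1}$ together with the upper half of the error bound gives $c(s_k,s_{k+1}) + \hat{V}^B_{N-k-2}(s_{k+1},g) = \min_{s_m}\{c(s_k,s_m) + \hat{V}^B_{N-k-2}(s_m,g)\} \leq (T^B V^B_{N-k-2})(s_k,g) + (N-k-1)\epsilon = W_k + (N-k-1)\epsilon$. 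Then, applying the \emph{lower} half of the error bound to replace $\hat{V}^B_{N-k-2}(s_{k+1},g)$ by $W_{k+1} - (N-k-1)\epsilon$, I obtain the per-step inequality $c(s_k,s_{k+1}) + W_{k+1} \leq W_k + 2(N-k-1)\epsilon$ for $k = 0,\dots,N-2$, while the last step contributes exactly $c(s_{N-1},s_N) = W_{N-1}$ with no error, since $V_0$ is the exact base case and $s_N = g$ is fixed.

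Summing these inequalities telescopes the $W_k$ terms, leaving only $W_0 - W_{N-1}$; adding the exact final step then cancels $W_{N-1}$ and yields $\sum_{k=0}^{N-1} c(s_k,s_{k+1}) \leq W_0 + 2\epsilon\sum_{k=0}^{N-2}(N-k-1) = V^*(s,g) + (N^2-N)\epsilon$, using $\sum_{k=0}^{N-2}(N-k-1) = N(N-1)/2$. This is exactly the claimed bound.

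The main obstacle is purely bookkeeping: correctly aligning the value-function index $N-k-2$ with the remaining step budget at each stage of the rollout, and invoking the error bound in both directions (as an upper bound to exploit greediness, and as a lower bound to reintroduce the exact cost-to-go $W_{k+1}$). The conceptual point worth emphasizing, by contrast with Proposition~\ref{prop:STDP_error}, is that although the Bellman value error grows only linearly in $k$ (versus exponentially for SGTDP), the sequential rollout makes $N$ successive greedy decisions each drifting by up to $O(N)\epsilon$, so the accumulated error is $O(N^2)$ rather than the $O(N\log N)$ achieved by the tree.
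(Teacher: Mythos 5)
Your proof is correct and follows the same two-stage strategy as the paper's: first a linear-in-$k$ error bound on the Bellman iterates, then a per-step greedy analysis whose exact cost-to-go terms telescope, leaving $V^*(s,g)$ plus the accumulated per-step errors. Your $W_k$ notation is just a repackaging of the paper's chain of inequalities, and your per-step bound $2(N-k-1)\epsilon$ and final sum $(N^2-N)\epsilon$ coincide with the paper's. There is one genuine (and worthwhile) difference in the accounting: at the greedy step, the paper passes through the \emph{approximate} next iterate, bounding $(T^B\hat{V}^B_{N-k-2})(s_k,g) \leq \hat{V}^B_{N-k-1}(s_k,g)+\epsilon$ and then converting $\hat{V}^B_{N-k-1}$ to $V^B_{N-k-1}$, which costs $\epsilon + \|\hat{V}^B_{N-k-1}-V^B_{N-k-1}\|_\infty$; you instead push the value error through the minimization directly, $(T^B\hat{V}^B_{N-k-2})(s_k,g) \leq (T^B V^B_{N-k-2})(s_k,g) + \|\hat{V}^B_{N-k-2}-V^B_{N-k-2}\|_\infty = W_k + (N-k-1)\epsilon$, never touching index $N-k-1$. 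This matters because the paper's route relies on the bound $\|\hat{V}^B_k - V^B_k\|_\infty \leq k\epsilon$, which implicitly assumes $\hat{V}^B_0 = V_0$ exactly, whereas the proposition's hypotheses allow $\|\hat{V}^B_0 - V_0\|_\infty \leq \epsilon$; under the stated hypotheses the correct iterate bound is $(k+1)\epsilon$, and with that bound the paper's chain would yield $2(N-k)\epsilon$ per step, i.e.\ $(N^2+N-2)\epsilon$ overall. Your route absorbs the initial error and still lands exactly on $(N^2-N)\epsilon$, so your bookkeeping is tighter and actually matches the stated constant under the stated assumptions.
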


\begin{proof}
Denote $V_k$ as the Bellman value function at iteration $k$, $\hat{V}_k$ as the $\epsilon$-approximated Bellman value function and $T_B$ as the Bellman operator. Using the properties  $ \| T_B \hat{V}_{k-1} - \hat{V_k} \|_\infty \leq \epsilon $ and $ \| V_k - \hat{V_k} \|_\infty \leq k\epsilon $ ~\citep[][Pg. 332]{bertsekas1996neuro}, the following relation holds:

for $ 0 \leq k \leq N-2$:
\begin{align*}
c(s_k, s_{k+1}) &= c(s_k, s_{k+1}) + \hat{V}_{N-k-2}(s_{k+1}, g) - \hat{V}_{N-k-2}(s_{k+1}, g) \\
&= (T_B\hat{V}_{N-k-2})(s_{k}, g) - \hat{V}_{N-k-2}(s_{k+1}, g) \\
&\leq (\hat{V}_{N-k-1}(s_k, g) + \epsilon) - \hat{V}_{N-k-2}(s_{k+1}, g) \\
&\leq V_{N-k-1}(s_k, g) + (N-k-1)\epsilon + \epsilon - V_{N-k-2}(s_{k+1}, g) + (N-k-2)\epsilon \\
&\leq V_{N-k-1}(s_k, g) - V_{N-k-2}(s_{k+1}, g) + 2(N-k-1)\epsilon\\
\end{align*}
for $k=N-1$:
\begin{align*}
c(s_k, s_{k+1}) &= c(s_{N-1}, g) = V_0(s_{N-1}, g)
\end{align*}

Summing all the path costs:
\begin{align*}
\sum_{k=0}^{N-1} c(s_k, s_{k+1}) &= V_0(s_{N-1}, g) + \sum_{k=0}^{N-2} c(s_k, s_{k+1}) \\
&\leq V_0(s_{N-1}, g) + \sum_{k=0}^{N-2} V_{N-k-1}(s_k, g) - V_{N-k-2}(s_{k+1}, g) + 2(N-k-1)\epsilon \\
&=  V_0(s_{N-1}, g) + V_{N-1}(s_0, g) - V_0(s_{N-1}, g) + (N^2-N)\epsilon = V^*(s_0, g) + (N^2-N)\epsilon
\end{align*}
\end{proof}

\begin{proposition}\label{prop:Bellman_multiple_v_error_example}
The error bound stated in Proposition(\ref{prop:Bellman_multiple_v_error}),  $\sum_{i=0}^{N-1} c(s_i,s_{i+1}) - V^*(s,g) \leq \mathcal{O}(N^2) $, is indeed a tight bound. 
\end{proposition}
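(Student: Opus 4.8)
The plan is to prove tightness by exhibiting an explicit worst-case instance --- the graph shown in Figure~\ref{fig:graph_bellman_multiple_v} --- together with a concrete sequence of approximate value functions $\hat V^B_0,\dots,\hat V^B_N$ for which the greedy Bellman trajectory is forced into a long detour. Concretely, I would build a graph on $N+1$ nodes $s_0,\dots,s_{N-1},g$ containing a cheap optimal route from $s=s_0$ to $g$ of total cost $V^*(s,g)=\mathcal{O}(N\epsilon)$, together with a \emph{decoy} chain $s_0\to s_1\to\cdots\to s_{N-1}\to g$ whose edges are arranged so that traversing the entire chain costs $\tfrac{N^2-N}{2}\epsilon = V^*(s,g)+\Theta(N^2\epsilon)$. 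All remaining (non-drawn) edges are assigned cost $\infty$, so that only these two routes are available, and the gap between the greedy cost and $V^*$ is governed entirely by whether greedy stays on the cheap route or commits to the chain.

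The crux is to specify the approximate value functions so that (i) they satisfy the consistency requirements of Proposition~\ref{prop:Bellman_multiple_v_error}, namely $\|\hat V^B_{k+1}-T^B\hat V^B_k\|_\infty\le\epsilon$ and $\|\hat V^B_0-V_0\|_\infty\le\epsilon$, while (ii) the accumulated error $\hat V^B_k-V_k$ grows linearly in the horizon index $k$ and points in exactly the direction that makes the decoy chain look optimal to the one-step greedy lookahead. Recalling that the greedy rule at step $k$ uses $\hat V^B_{N-k-2}$, the largest error budget (up to $(N-k-1)\epsilon$ via the bound $\|V_k-\hat V^B_k\|_\infty\le k\epsilon$) is available precisely at the early steps, where a misstep is most costly. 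I would therefore set $\hat V^B_k$ to underestimate the value of the chain successor $s_{k+1}$ by the maximal admissible amount while leaving the values along the cheap route essentially exact, so that at every node the greedy comparison $c(s_k,s_{k+1})+\hat V^B_{N-k-2}(s_{k+1},g)\le c(s_k,s')+\hat V^B_{N-k-2}(s',g)$ selects the chain edge over the cheap alternative.

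With such value functions in hand, I would trace the greedy trajectory step by step, verifying that it indeed follows $s_0\to s_1\to\cdots\to s_{N-1}\to g$ and summing the edge costs to obtain $\sum_{i=0}^{N-1}c(s_i,s_{i+1})=\tfrac{N^2-N}{2}\epsilon$; combined with $V^*(s,g)=\mathcal{O}(N\epsilon)$ this gives $\sum_{i=0}^{N-1}c(s_i,s_{i+1})-V^*(s,g)=\Omega(N^2\epsilon)$, matching the upper bound of Proposition~\ref{prop:Bellman_multiple_v_error} up to a constant and establishing tightness. The main obstacle is step (ii): because the conditions $\|\hat V^B_{k+1}-T^B\hat V^B_k\|_\infty\le\epsilon$ couple consecutive value functions through the Bellman operator, the per-iteration $\epsilon$ deviations are not free but must be chosen so that they compound into an $\Theta(k\epsilon)$ bias at horizon $k$ and, simultaneously, mislead the greedy choice at \emph{every} node rather than being corrected by a later relaxation. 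Producing a single assignment of edge costs and approximate values that meets both constraints at once is the delicate part of the construction, and it is precisely what the numeric labels (true values in green, approximate values in red) in Figure~\ref{fig:graph_bellman_multiple_v} are designed to make explicit.
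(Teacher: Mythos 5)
Your high-level plan (an explicit worst-case graph plus adversarially chosen, admissible approximation errors that mislead the greedy policy) is the same as the paper's, but the one structural commitment you actually make would break the argument, and the part you defer to the unseen figure is the heart of the proof, not a finishing detail. You stipulate that all edges other than the cheap route and the decoy chain have cost $\infty$, ``so that only these two routes are available.'' In that graph, greedy faces a single meaningful decision, at $s_0$: once it enters the chain, the chain is the \emph{only} way to reach $g$, so the true value of $s_1$ equals the remaining chain cost, $\Theta(N^2\epsilon)$. But under the hypotheses of Proposition~\ref{prop:Bellman_multiple_v_error} the cumulative error $\|\hat V^B_k - V_k\|_\infty$ is only $O(N\epsilon)$ (a bound you yourself invoke), so $\hat V^B_{N-2}(s_1,g)$ remains $\Theta(N^2\epsilon)$ no matter how the errors are arranged, greedy correctly chooses the cheap route, and no quadratic gap can be produced. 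Note that your own later requirement --- that ``at every node the greedy comparison \dots selects the chain edge over the cheap alternative'' --- already contradicts the two-route stipulation: it presupposes a cheap alternative edge \emph{at every chain node}, which your graph does not have.

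That per-node escape structure is exactly the missing idea, and it is what Figure~\ref{fig:graph_bellman_multiple_v} encodes. In the paper's construction every chain node $s_k$ has, besides the chain edge $s_k \to s_{k+1}$, an escape edge $s_k \to s_{N-1}$ with the \emph{same} immediate cost $(N-k-1)\epsilon$, and $s_{N-1}\to g$ costs $0$. This keeps the true value of every chain node linear in the remaining horizon, $V(s_k,g)=(N-k-1)\epsilon$, so that driving $\hat V^B_{N-k-2}(s_{k+1},g)$ down to $0$ stays within the admissible error at that horizon, while a tiny overestimate $\delta\in(0,\epsilon]$ at $s_{N-1}$ flips all $N-1$ greedy comparisons (the immediate costs of the two options being equal, the decision rests entirely on the corrupted values). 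The quadratic loss then accumulates because each greedy step pays $(N-k-1)\epsilon$ yet decreases the optimal cost-to-go by only $\epsilon$, giving total cost $\tfrac{N^2-N}{2}\epsilon$ against $V^*(s,g)=(N-1)\epsilon$. Finally, checking that this error pattern is compatible with the step-wise conditions $\|\hat V^B_{j}-T^B\hat V^B_{j-1}\|_\infty\le\epsilon$ (it is: the zero-cost self-transitions let each chain value drift down by $\epsilon$ per iteration, reaching $0$ exactly when greedy consults it) is precisely the ``delicate part'' you leave to the figure's numeric labels --- so, as written, the proposal does not yet contain a proof of tightness.
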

\begin{proof}
We show a graph example (Figure \ref{fig:graph_bellman_multiple_v}),where a sequence of $\epsilon$-approximate Bellman value functions $\hat{V}_0,\dots, \hat{V}_{N-1}$ might form the path from the start to the goal state, going through all the available states, with a total cost of $(N^2-N)/2 = \mathcal{O}(N^2)$, while the optimal path $(s_0, s_{N-1}, s_N)$ has a total cost of $V^*(s,g) = (N-1)\epsilon$. \newline
The Bellman value of every state $s_k$, $0 \leq k \leq s_{N-3}$: $V_{N-k-1}(s_k, g) = (N-k-1)\epsilon$, where $N-k-1$ is the time horizon from state $s_k$ to the goal. Since the $\epsilon$-approximate Bellman value function at every state $s_k$ might have a maximum approximation error of $(N-k-1)\epsilon$, it might suggest that $\hat{V}_{N-k-1}(s_k, g) = 0$ for every $1 \leq k \leq N-2$.
The Bellman value for state $s_{N-1}$ is 0 and the $\epsilon$-approximate Bellman value might be $\delta$ for some $0 < \delta \leq \epsilon$.
The greedy Bellman policy has to determine at every state $s_k$, $0 \leq k \leq s_{N-3}$, whether to go to $s_{k+1}$ or to $s_{N-1}$. Since both possible actions have an immediate cost of $(N-k-1)\epsilon$, the decision will only be based on the evaluation of $s_{k+1}$ and $s_{N-1}$ with the $\epsilon$-approximate Bellman value function: $\hat{V}_{N-k-1}(s_k, g) = 0 < \delta = \hat{V}_{0}(s_{N-1}, g)$. Therefore, the greedy bellman policy will decide at every state $s_k$ to go to $s_{k+1}$, forming the path with a total cost of $\sum_{k=1}^{N-1} k\epsilon = {N^2-N}\epsilon/2$ \newline
\end{proof}

\newpage

\section{Policy Gradient Theorem for Sub-goal Trees}\label{sec:sgt-pg-thm-proof}
In this section we provide the mathematical framework and tool we used in the Policy Gradient Theorem~\ref{thm:sgt-pg} in Section~\ref{sec:sgt-pg} which allows us to formulate the SGT-PG algorithm (Section~\ref{sec:sgt-pg}).
The SGT the prediction process (Eq.\eqref{eq:recursive-trajectory-likelihood}) no-longer decomposes sequentially like a MDP, therefore, we provide a different mathematical construct:

\textbf{Finite-depth Markovian sub-goal tree (FD-MSGT)}
is a process predicting sub-goals (or intermediate states) of a trajectory in a dynamical system as described in~\eqref{eq:cont_sp}. 
The process evolves trajectories recursively (as we describe in detail below), inducing a tree-like decomposition for the probability of a trajectory as described Eq.~\eqref{eq:recursive-trajectory-likelihood}.
In this work we consider trees with fixed levels $D$ (corresponding to finite-horizon MDP in sequential prediction RL with horizon $T=2^D$) \footnote{Extending MSGT to infinite recursion without a depth restrictions (similar to infinite-horizon MDP) is left for future work. }.
Formally, a FD-MSGT is comprised of $(S,\rho_0,c, D)$ where $S$ is the state space, $\rho_0$ is the initial start-goal pairs distribution, $c$ is a non-negative cost function obeying Eq.~\eqref{eq:cont_sp}, and $D$ is the depth of the tree \footnote{Note the crucial difference between MSGTs and MDPs. MSGTs operate on pairs of states from $S$, whereas MDPs, which are usually not goal conditioned, operate on single states.}.

\textbf{Recursive evolution of FD-MSGT:} 
Formally, an initial pair $(s_0=s, s_T=g)\sim\rho_0$ is sampled, defining the root of the tree.
Next, a policy $\pi(s_{\frac{T}{2}} | s_0, s_T)$ is used to predict the sub-goal $s_{\frac{T}{2}}$, creating two new tree nodes of depth $D$ corresponding to the segments $(s_0, s_{\frac{T}{2}})$ and $(s_{\frac{T}{2}}, s_T)$.
Recursively each segment is again partitioned using $\pi$ resulting in four tree nodes of level $D-1$ corresponding to segments $(s_0, s_{\frac{T}{4}})$, $(s_{\frac{T}{4}}, s_{\frac{T}{2}})$, $(s_{\frac{T}{2}}, s_{\frac{3T}{4}})$ and $(s_{\frac{3T}{4}}, s_T)$.
The process continues recursively until the depth of the tree is $1$.
This results in Eq.\eqref{eq:recursive-trajectory-likelihood}, namely,
\begin{align*}
\Pr_{\pi}[\tau | s, g]= \Pr_{\pi}[s_0, \dots, s_T | s, g] = \Pr_{\pi}[s_0, \dots, s_{\frac{T}{2}} | s, s_m] 
    \Pr_{\pi}[s_{\frac{T}{2}}, \dots, s_T | s_m, g] \pi(s_m | s, g).
\end{align*}
FD-MSGT results in $\sum_{d=0}^{D-1}{2^d}=2^D-1$ sub-goals, and $2^D+1$ overall states (including $s$ and $g$), setting $T=2^D$.
Finally, $c$ defines the cost of the prediction $c(\tau)=c_{0:T}$, and is evaluated on the leaves of the FD-MSGT tree according to Eq.~\eqref{eq:cont_sp}.
Figure~\ref{fig:robot-track} illustrates a FD-MSGT of $D=2$.
The objective of FD-MSGT is to find a policy $\pi:S^2\rightarrow S$ minimizing the expected cost of a trajectory
$J^{\pi} = \mathbb{E}_{\tau\sim\rho(\pi)}\left[c_{\tau} \right]$.

\textbf{Non-recursive formulation for trajectory likelihood:} We next derive a non-recursive formulation of Eq.~\eqref{eq:recursive-trajectory-likelihood}, using the notations defined in Theorem~\ref{thm:sgt-pg}, namely, 
$s^{i,d}=s_{(i-1)\cdot 2^d}$, $s_m^{i,d}=s_{(2i-1)\cdot 2^{d-1}}$, $g^{i,d}=s_{i\cdot 2^d}$, and $C_{\tau}^{i,d}=c_{(i-1)\cdot 2^d: i\cdot 2^d}$ is the sum of costs from $s^{i,d}$ to $g^{i,d}$ of $\tau$.
We re-arrange the terms in~\eqref{eq:recursive-trajectory-likelihood} grouping by depth resulting in a non-recursive formula,
\begin{align}\label{eq:explicit-trajectory-likelihood}
    \Pr_{\pi}[\tau |s,g]=\Pr_{\pi}[s_0, \dots s_T|s,g] =
    \prod_{d=1}^{D}{ \prod_{i=1}^{2^{D-d}}{\pi\left(s_m^{i,d} \bigg\vert s^{i,d}, g^{i,d}\right). } }
\end{align}{}

For example consider a FD-MSGT with $D=3$ ($T=8)$. 
Given $s_0=s$ and $s_8=g$, the sub-goals will be $s_1,\dots s_7$.
We enumerate the two products in Eq.~\eqref{eq:explicit-trajectory-likelihood} resulting in:
\begin{enumerate}
    
    \item \textbf{$D=3$: }Results in a single iteration of $i=1=2^{3-3}$.
    \begin{align*}
        s^{1,3} = s_{(1-1)\cdot 2^3}= s_0, \quad
        s_m^{1,3}=s_{(2\cdot 1 -1)\cdot 2^{3-1}
        }=s_4, \quad
        g^{1,3} = s_{1\cdot 2^3}= s_8
    \end{align*}{}
    
    \item \textbf{$D=2$: } In this case $i\in[1, 2^{3-2}=2]$.
    \begin{align*}
        s^{1,2} = s_{(1-1)\cdot 2^2}= s_0, \quad
        s_m^{1,2}=s_{(2\cdot 1 -1)\cdot 2^{2-1}
        }=s_2, \quad
        g^{1,2} = s_{1\cdot 2^2}= s_4 \\
        s^{2,2} = s_{(2-1)\cdot 2^2}= s_4, \quad
        s_m^{2,2}=s_{(2\cdot 2 -1)\cdot 2^{2-1}
        }=s_6, \quad
        g^{2,2} = s_{2\cdot 2^2}= s_8
    \end{align*}{}

    \item \textbf{$D=1$: } Finally, predicts the leaves resulting in $i\in [1, 2^{3-1}=4]$.
    \begin{align*}
        s^{1,1} = s_{(1-1)\cdot 2^1}= s_0, \quad
        s_m^{1,1}=s_{(2\cdot 1 -1)\cdot 2^{1-1}
        }=s_1, \quad
        g^{1,1} = s_{1\cdot 2^1}= s_2 \\
        s^{2,1} = s_{(2-1)\cdot 2^1}= s_2, \quad
        s_m^{2,1}=s_{(2\cdot 2 -1)\cdot 2^{1-1}
        }=s_3, \quad
        g^{2,1} = s_{2\cdot 2^1}= s_4 \\
        s^{3,1} = s_{(3-1)\cdot 2^1}= s_4, \quad
        s_m^{3,1}=s_{(2\cdot 3 -1)\cdot 2^{1-1}
        }=s_5, \quad
        g^{3,1} = s_{3\cdot 2^1}= s_6 \\
        s^{4,1} = s_{(4-1)\cdot 2^1}= s_6, \quad
        s_m^{4,1}=s_{(2\cdot 4 -1)\cdot 2^{1-1}
        }=s_7, \quad
        g^{4,1} = s_{4\cdot 2^1}= s_8
    \end{align*}{}
\end{enumerate}{}
Allowing us to assert the equivalence of the explicit and recursive formulations in this case.

\subsection{Proof of Theorem~\ref{thm:sgt-pg}}

Let $\pi_{\theta}$ be a policy with parameters $\theta$, we next prove a policy gradient theorem (Theorem~\ref{thm:sgt-pg-helper1}) for computing $\nabla_{\theta}J^{\pi_{\theta}}$, using the following proposition:
\begin{proposition}\label{prop:prop_sgt_pg_helper1}
Let $\pi_{\theta}$ be a policy with parameters $\theta$ of a FD-MSGT with depth $D$, then 
\begin{align}\label{eq:prop_sgt_pg_helper1}
    \nabla_{\theta}\log{\Pr_{\rho(\pi_{\theta})}[\tau]} = 
    \sum_{d=1}^{D}{ \sum_{i=1}^{2^{D-d}}{\nabla_{\theta}\log {\pi_{\theta}\left(s_m^{i,d} \bigg\vert s^{i,d}, g^{i,d}\right) } } }
\end{align}{}
\end{proposition}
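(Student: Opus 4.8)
The plan is to prove Proposition~\ref{prop:prop_sgt_pg_helper1} by starting from the non-recursive product formula for the trajectory likelihood already established in Eq.~\eqref{eq:explicit-trajectory-likelihood}, and simply taking the gradient of its logarithm. Since $\Pr_{\rho(\pi_\theta)}[\tau]$ factors as a product over the tree nodes, indexed by depth $d$ and sibling index $i$, the logarithm turns the product into a sum, and the gradient passes through the sum linearly. This reduces the whole statement to a routine $\nabla \log \prod = \sum \nabla \log$ manipulation, so the only real content is justifying that the product formula~\eqref{eq:explicit-trajectory-likelihood} is correct in the first place.

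Concretely, I would first recall that $\rho(\pi_\theta)$ draws $(s,g)\sim\rho_0$ and then recursively predicts sub-goals via Eq.~\eqref{eq:recursive-trajectory-likelihood}. The key preliminary step is to establish the equivalence between the recursive form~\eqref{eq:recursive-trajectory-likelihood} and the flattened product~\eqref{eq:explicit-trajectory-likelihood}. I would do this by induction on the depth $D$: the base case $D=1$ has a single sub-goal prediction $\pi(s_m\mid s,g)$, matching the single factor in the product; for the inductive step, I would unfold~\eqref{eq:recursive-trajectory-likelihood} once, writing $\Pr_\pi[\tau\mid s,g]$ as the product of the two sub-segment likelihoods times $\pi(s_m\mid s,g)$, apply the induction hypothesis to each sub-segment (each of depth $D-1$), and check that reindexing the two resulting products and the extra top-level factor exactly reproduces the double product $\prod_{d=1}^{D}\prod_{i=1}^{2^{D-d}}\pi_\theta(s_m^{i,d}\mid s^{i,d},g^{i,d})$. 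The index bookkeeping here — verifying that the $s^{i,d}, s_m^{i,d}, g^{i,d}$ definitions line up across the two halves of the tree — is precisely what the worked $D=3$ example in the excerpt is meant to illustrate.

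Once~\eqref{eq:explicit-trajectory-likelihood} is in hand, the proof finishes in one line:
\begin{align*}
    \nabla_{\theta}\log{\Pr_{\rho(\pi_{\theta})}[\tau]}
    &= \nabla_{\theta}\log{\prod_{d=1}^{D}\prod_{i=1}^{2^{D-d}}\pi_{\theta}\!\left(s_m^{i,d} \,\Big\vert\, s^{i,d}, g^{i,d}\right)} \\
    &= \sum_{d=1}^{D}\sum_{i=1}^{2^{D-d}}\nabla_{\theta}\log{\pi_{\theta}\!\left(s_m^{i,d} \,\Big\vert\, s^{i,d}, g^{i,d}\right)},
\end{align*}
where I used that $\log$ of a product is a sum of logs and that $\nabla_\theta$ is linear. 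Note that the initial draw $(s,g)\sim\rho_0$ contributes no $\theta$-dependent factor, so it vanishes under the gradient and does not appear in the sum.

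I expect the main obstacle to be entirely notational rather than conceptual: carefully verifying the index alignment in the inductive step, i.e.\ that after splitting at the root the left sub-tree's nodes occupy indices $i=1,\dots,2^{D-1-d}$ and the right sub-tree's nodes occupy the remaining range at each depth, so that the union of the two half-trees plus the root reconstitutes the full double product with the correct $s^{i,d}, s_m^{i,d}, g^{i,d}$ labels. Getting the powers of two and the offsets right is fiddly, which is presumably why the authors front-loaded the explicit $D=3$ computation before stating the proposition. The actual gradient step is trivial once the factorization is secured.
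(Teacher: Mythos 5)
Your proposal is correct and follows essentially the same route as the paper's proof: express $\Pr_{\rho(\pi_\theta)}[\tau]$ via the flattened product formula~\eqref{eq:explicit-trajectory-likelihood} (times the $\theta$-independent factor $\rho_0(s_0,s_T)$), take logarithms to turn the product into a sum, and differentiate, with the initial-distribution term vanishing under $\nabla_\theta$. The only difference is that you propose an explicit induction on $D$ to justify the equivalence of the recursive likelihood~\eqref{eq:recursive-trajectory-likelihood} and the product form, a step the paper merely asserts as a regrouping and illustrates with the $D=3$ example, so your version is if anything slightly more complete.
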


\begin{proof}
First, we express $\Pr_{\rho(\pi_{\theta})}[\tau]$ using Eq. \ref{eq:explicit-trajectory-likelihood} and $\rho_0$ obtaining, 
\begin{align*}
    \Pr_{\rho(\pi_{\theta})}[\tau] = \rho(s_0,s_T)\cdot 
    \prod_{d=1}^{D}{ \prod_{i=1}^{2^{D-d}}{\pi_{\theta}\left(s_m^{i,d} \bigg\vert s^{i,d}, g^{i,d}\right). } }
\end{align*}{}
Next, by taking the log we have,
\begin{align*}
    \log {\Pr_{\rho(\pi_{\theta})}[\tau]} = \log{\rho(s_0,s_T)} + 
    \sum_{d=1}^{D}{ \sum_{i=1}^{2^{D-d}}{\log {\pi_{\theta}\left(s_m^{i,d} \bigg\vert s^{i,d}, g^{i,d}\right) } } }.
\end{align*}{}
and taking the gradient w.r.t $\theta$ yields Eq.~\eqref{eq:prop_sgt_pg_helper1}
\end{proof}

Proposition~\ref{prop:prop_sgt_pg_helper1} shows that the gradient of a trajectory w.r.t. $\theta$ does not depend on the initial distribution. 
This allows us to derive a policy gradient theorem:

\begin{theorem}\label{thm:sgt-pg-helper1}
Let $\pi_{\theta}$ be a stochastic SGT policy,  $\rho(\pi_{\theta})$ be a trajectory distribution defined above, and $T=2^D$.
Then 
\begin{align}\label{eq:pg-thm-gradient-short-helper1}
    \nabla_{\theta}J(\theta)=\mathbb{E}_{\rho(\pi_{\theta})}\left[ c_{\tau}\cdot \nabla_{\theta}\log{\Pr_{\rho(\pi_{\theta})}[\tau] }\right]  
    =\mathbb{E}_{\rho(\pi_{\theta})}\left[
    c_{\tau}\cdot\sum_{d=1}^{D}{ \sum_{i=1}^{2^{D-d}}{
    \nabla_{\theta}\log{\pi_{\theta}\left(s_m^{i,d} \bigg\vert s^{i,d}, g^{i,d}\right) } } }
    \right]\!. \nonumber
\end{align}
\end{theorem}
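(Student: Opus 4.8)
The plan is to prove this by the standard score-function (likelihood-ratio) identity, now applied to the tree-structured trajectory likelihood, and then to substitute the explicit gradient decomposition already established in Proposition~\ref{prop:prop_sgt_pg_helper1}. The two equalities in the statement rely on different ingredients: the first is the generic REINFORCE identity, while the second is pure substitution once Proposition~\ref{prop:prop_sgt_pg_helper1} is invoked.

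For the first equality I would begin by writing the objective as an expectation over trajectories drawn from the FD-MSGT process,
\[
    J(\theta) = \mathbb{E}_{\tau\sim\rho(\pi_{\theta})}[c_{\tau}] = \sum_{\tau} \Pr_{\rho(\pi_{\theta})}[\tau]\, c_{\tau},
\]
with the sum replaced by an integral in the continuous-state case. Since the cost $c_{\tau}=c_{0:T}$ is a fixed function of the realized states and carries no dependence on $\theta$, differentiating and interchanging $\nabla_{\theta}$ with the sum gives $\nabla_{\theta}J(\theta) = \sum_{\tau}\bigl(\nabla_{\theta}\Pr_{\rho(\pi_{\theta})}[\tau]\bigr)c_{\tau}$. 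Applying the log-derivative identity $\nabla_{\theta}\Pr = \Pr\cdot\nabla_{\theta}\log\Pr$ converts this back into an expectation, yielding $\nabla_{\theta}J(\theta)=\mathbb{E}_{\rho(\pi_{\theta})}[c_{\tau}\cdot\nabla_{\theta}\log\Pr_{\rho(\pi_{\theta})}[\tau]]$, which is precisely the left equality.

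For the second equality I would substitute the closed form for $\nabla_{\theta}\log\Pr_{\rho(\pi_{\theta})}[\tau]$ from Proposition~\ref{prop:prop_sgt_pg_helper1}, namely the double sum $\sum_{d=1}^{D}\sum_{i=1}^{2^{D-d}}\nabla_{\theta}\log\pi_{\theta}(s_m^{i,d}\mid s^{i,d}, g^{i,d})$, directly into the expectation. Because the cost factor $c_{\tau}$ depends on neither $d$ nor $i$, it passes through the double sum unchanged, immediately producing the right-hand side. The essential input here is that Proposition~\ref{prop:prop_sgt_pg_helper1} has already discarded the $\theta$-independent initial-distribution term $\log\rho(s_0,s_T)$ (whose gradient vanishes) and has re-indexed the recursive likelihood of Eq.~\eqref{eq:recursive-trajectory-likelihood} into the non-recursive product of Eq.~\eqref{eq:explicit-trajectory-likelihood}.

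The only delicate point is justifying the interchange of $\nabla_{\theta}$ with the sum (or integral) over trajectories. In the discrete-state setting the state space is finite and the tree has fixed depth $D$, so the trajectory space is finite and the interchange is immediate. In the continuous-state setting one needs a dominated-convergence or Leibniz-rule argument; I would assume the standard regularity conditions of the policy-gradient literature (the policy density $\pi_{\theta}$ is differentiable in $\theta$ and admits an integrable dominating function), making this the main—though entirely routine—obstacle rather than a substantive difficulty.
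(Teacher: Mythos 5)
Your proposal matches the paper's proof essentially step for step: the first equality via the score-function identity $\nabla_{\theta}\Pr = \Pr\cdot\nabla_{\theta}\log\Pr$ applied to the explicit sum over trajectories, and the second by direct substitution of Proposition~\ref{prop:prop_sgt_pg_helper1}. Your added remark on justifying the gradient--sum interchange (finite trajectory space in the discrete case, dominated convergence under standard regularity in the continuous case) is a point the paper leaves implicit, but it does not change the argument.
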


\begin{proof}
To obtain $\nabla_{\theta}J(\theta)$ we write Eq.~\eqref{eq:cost-function-J} as an explicit expectation and use $\nabla_x f(x) = f(x) \cdot \nabla_x \log{f(x)}$:
\begin{align*}
    \nabla_{\theta}J(\theta)=&\sum_{\tau}{c(\tau) \cdot \nabla_{\theta}\Pr_{\rho(\pi_{\theta})}[\tau]}
    =\sum_{\tau}{c(\tau) \cdot \Pr_{\rho(\pi_{\theta})}[\tau] \cdot \nabla_{\theta}\log{\Pr_{\rho(\pi_{\theta})}[\tau] }}
    =\mathbb{E}_{\rho(\pi_{\theta})}\left[c(\tau) \nabla_{\theta}\log{\Pr_{\rho(\pi_{\theta})}[\tau] }\right]
\end{align*}{}
This proves the first equality. For the second equality we substitute $\nabla_{\theta}\log{\Pr_{\rho(\pi_{\theta})}[\tau]}$ according to Eq.~\eqref{eq:prop_sgt_pg_helper1}.
\end{proof}

The policy gradient theorem allows estimating $\nabla J(\theta)$ from on policy data collected using $\pi_{\theta}$. 
We next show how to improve the estimate in Theorem~\ref{thm:sgt-pg-helper1} using control variates (baselines).
We start with the following baseline-reduction proposition.

\begin{proposition}\label{prop:sgt-pg-helper-baseline}
Let $\pi_{\theta}$ be a policy with parameters $\theta$ of a FD-MSGT with depth $D$, and let $b^{i,d}=b(s^{i,d}, g^{i,d})$ be any fixed function $b^{i,d}:S^2\rightarrow R$, then 
\begin{equation}\label{eq:sgt-pg-baseline-prop}
    \mathbb{E}_{\rho(\pi_{\theta})}\left[
    \sum_{d=1}^{D}{ \sum_{i=1}^{2^{D-d}}{
    b^{i,d}\cdot\nabla_{\theta}\log{\pi_{\theta}\left(s_m^{i,d} \bigg\vert s^{i,d}, g^{i,d}\right) } } }
    \right] = 0.
\end{equation}{}
\end{proposition}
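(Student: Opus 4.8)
The plan is to prove the baseline-reduction identity \eqref{eq:sgt-pg-baseline-prop} by showing that each individual summand, after taking expectations, vanishes on its own. Concretely, I would establish that for every fixed pair $(i,d)$,
\begin{equation*}
    \mathbb{E}_{\rho(\pi_{\theta})}\left[ b^{i,d}\cdot\nabla_{\theta}\log{\pi_{\theta}\left(s_m^{i,d} \,\middle\vert\, s^{i,d}, g^{i,d}\right)} \right] = 0,
\end{equation*}
after which linearity of expectation and the double sum immediately give the result. This mirrors the standard argument for why state-dependent baselines do not bias the policy gradient in conventional RL, adapted to the tree structure.

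The key step is a tower-property (law of total expectation) argument. I would condition on the two endpoint states $s^{i,d}$ and $g^{i,d}$ that define the node at which the sub-goal $s_m^{i,d}$ is predicted. The crucial structural fact, inherited from the recursive decomposition in \eqref{eq:recursive-trajectory-likelihood} and its non-recursive form \eqref{eq:explicit-trajectory-likelihood}, is that conditioned on $s^{i,d}$ and $g^{i,d}$, the sub-goal $s_m^{i,d}$ is drawn exactly from $\pi_{\theta}(\cdot \mid s^{i,d}, g^{i,d})$. Moreover $b^{i,d}=b(s^{i,d},g^{i,d})$ depends only on the conditioning variables and is therefore constant with respect to the inner expectation over $s_m^{i,d}$. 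Pulling $b^{i,d}$ out of the inner expectation, the remaining factor is
\begin{equation*}
    \mathbb{E}_{s_m^{i,d}\sim\pi_{\theta}(\cdot \mid s^{i,d},g^{i,d})}\left[ \nabla_{\theta}\log{\pi_{\theta}\left(s_m^{i,d} \,\middle\vert\, s^{i,d}, g^{i,d}\right)} \right],
\end{equation*}
which is the classic score-function identity: writing $\nabla_{\theta}\log\pi_{\theta} = \nabla_{\theta}\pi_{\theta}/\pi_{\theta}$, the expectation becomes $\sum_{s_m}\nabla_{\theta}\pi_{\theta}(s_m\mid s^{i,d},g^{i,d}) = \nabla_{\theta}\sum_{s_m}\pi_{\theta}(s_m\mid s^{i,d},g^{i,d}) = \nabla_{\theta}(1) = 0$, exchanging sum and gradient. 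Taking the outer expectation over the endpoints then yields zero for the whole summand.

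The main obstacle, and the point that requires care rather than routine calculation, is justifying that conditioning on $(s^{i,d}, g^{i,d})$ genuinely isolates the factor $\pi_{\theta}(s_m^{i,d}\mid s^{i,d},g^{i,d})$ as the sole dependence on $s_m^{i,d}$ in the trajectory density. I would argue this by appealing to the factored form \eqref{eq:explicit-trajectory-likelihood}: in the product $\prod_{d'}\prod_{i'}\pi_{\theta}(s_m^{i',d'}\mid\cdot)$, the variable $s_m^{i,d}$ appears as the predicted sub-goal in its own factor, and as an endpoint ($s^{\cdot,d-1}$ or $g^{\cdot,d-1}$) in the factors of its two children one level down. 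When we condition on the node endpoints $(s^{i,d},g^{i,d})$ and integrate over $s_m^{i,d}$ while marginalizing the descendants appropriately, the total-expectation decomposition lets us treat the draw of $s_m^{i,d}$ as governed purely by $\pi_{\theta}(\cdot\mid s^{i,d},g^{i,d})$. The cleanest way to make this rigorous is to group the trajectory expectation as an outer expectation over $(s^{i,d},g^{i,d})$, a middle expectation over $s_m^{i,d}\sim\pi_{\theta}(\cdot\mid s^{i,d},g^{i,d})$, and an inner expectation over everything else, noting that the $\nabla_{\theta}\log\pi_{\theta}$ term and the baseline $b^{i,d}$ depend only on the first two, so the descendant variables integrate out to $1$ and the score identity applies to the middle layer.
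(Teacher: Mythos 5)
Your proposal is correct and follows essentially the same route as the paper's proof: linearity of expectation over the double sum, the tower property conditioning on the endpoints $(s^{i,d}, g^{i,d})$, pulling out the baseline as a constant under that conditioning, and the score-function identity $\mathbb{E}_{s_m \sim \pi_\theta}\left[\nabla_\theta \log \pi_\theta(s_m \mid s^{i,d}, g^{i,d})\right] = 0$ obtained by differentiating $\sum_{s_m} \pi_\theta(s_m \mid \cdot) = 1$. Your additional care in justifying, via the factored likelihood \eqref{eq:explicit-trajectory-likelihood}, that the descendant variables integrate out and that $s_m^{i,d}$ conditioned on its endpoints is indeed governed by $\pi_\theta(\cdot \mid s^{i,d}, g^{i,d})$ is a point the paper's proof leaves implicit, but it does not change the argument.
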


\begin{proof}
First we establish a useful property. 
Let $p_\theta(z)$ be some parametrized distribution. 
Differentiating $\sum\nolimits_{z } {{p_\theta }(z)}  = 1$  yields
\begin{equation}\label{eq:helper}
\sum\nolimits_{z} {(\nabla \log {p_\theta }({z})} ){p_\theta }({z}) = {\mathbb E^\theta }(\nabla \log {p_\theta }({z})) = 0.
\end{equation}
Then we expand the left-hand side of Eq.~\eqref{eq:sgt-pg-baseline-prop}, and use Eq.~\eqref{eq:helper} in the last row:
\begin{align}
    \mathbb{E}_{\rho(\pi_{\theta})}\left[
    \sum_{d=1}^{D}{ \sum_{i=1}^{2^{D-d}}{
    b^{i,d}\cdot\nabla_{\theta}\log{\pi_{\theta}\left(s_m^{i,d} \bigg\vert s^{i,d}, g^{i,d}\right) } } }
    \right]= 
    &\sum_{d=1}^{D}{ \sum_{i=1}^{2^{D-d}}{\mathbb{E}_{\rho(\pi_{\theta})}\left[
    b^{i,d}\cdot\nabla_{\theta}\log{\pi_{\theta}\left(s_m^{i,d} \bigg\vert s^{i,d}, g^{i,d}\right) } 
    \right]} }
    = \nonumber \\
    &\sum_{d=1}^{D}{ \sum_{i=1}^{2^{D-d}}{
    \mathbb{E}_{\rho(\pi_{\theta})}\left[
    \mathbb{E}_{\rho(\pi_{\theta})}\left[
    b^{i,d}\cdot\nabla_{\theta}\log{\pi_{\theta}\left(s_m^{i,d} \bigg\vert s^{i,d}, g^{i,d}\right) } 
    \big\vert s^{i,d}, g^{i,d} \right]
    \right]
    } }
    = \nonumber \\
    &\sum_{d=1}^{D}{ \sum_{i=1}^{2^{D-d}}{
    \mathbb{E}_{\rho(\pi_{\theta})}\left[ b^{i,d} \cdot 
    \mathbb{E}_{\rho(\pi_{\theta})}\left[
    \nabla_{\theta}\log{\pi_{\theta}\left(s_m^{i,d} \bigg\vert s^{i,d}, g^{i,d}\right) } 
     \big\vert s^{i,d}, g^{i,d} \right]
    \right]
    } }
    = \nonumber \\
    &\sum_{d=1}^{D}{ \sum_{i=1}^{2^{D-d}}{
    \mathbb{E}_{\rho(\pi_{\theta})}\left[ b^{i,d} \cdot 0
    \right]
    } }
    =0, \nonumber
\end{align}{}
which concludes the proof.

\end{proof}{}

We define $b:S^2\rightarrow R$ as a \textit{segment-dependent baseline} if $b$ is a fixed function that operates on a pair of states $s$ and $g$. 
The last proposition allows us to reduce segment-dependent  baselines, $b^{i,d}$, from the estimations of $\nabla J(\theta)$ without bias.
Finally, in the next proposition we show that instead of estimating $\nabla J(\theta)$ using $c_{\tau}$ we can instead use $C^{i,d}$ as follows:

\begin{proposition}\label{prop:variance-reduction-helper}
Let $\pi_{\theta}$ be a policy with parameters $\theta$ of a FD-MSGT with depth $D$, then:
\begin{align*}
    \nabla_{\theta}J(\theta)
    =
    \sum_{d=1}^{D}{ \sum_{i=1}^{2^{D-d}}{
    \mathbb{E}_{\rho(\pi_{\theta})}\left[
    c_{(i-1)\cdot 2^d:i\cdot 2^d}\cdot
    \nabla_{\theta}\log{\pi_{\theta}\left(s_m^{i,d} \bigg\vert s^{i,d}, g^{i,d}\right) }
    \right]
    } }
\end{align*}{}
\end{proposition}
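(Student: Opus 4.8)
The plan is to start from Theorem~\ref{thm:sgt-pg-helper1}, which already gives $\nabla_{\theta}J(\theta)=\mathbb{E}_{\rho(\pi_{\theta})}[\, c_{\tau}\sum_{d=1}^{D}\sum_{i=1}^{2^{D-d}}\nabla_{\theta}\log\pi_{\theta}(s_m^{i,d}\mid s^{i,d},g^{i,d})\,]$, and to replace the global weight $c_{\tau}$ on each score term by the \emph{segment-local} cost $C_{\tau}^{i,d}=c_{(i-1)2^d:\,i2^d}$. By linearity I first move the double sum outside the expectation, so it suffices to treat each tree node $(i,d)$ separately. Writing $c_{\tau}=C_{\tau}^{i,d}+(c_{\tau}-C_{\tau}^{i,d})$, the whole claim reduces to showing, for every $(i,d)$, that
\[
\mathbb{E}_{\rho(\pi_{\theta})}\!\left[(c_{\tau}-C_{\tau}^{i,d})\,\nabla_{\theta}\log\pi_{\theta}(s_m^{i,d}\mid s^{i,d},g^{i,d})\right]=0,
\]
i.e.\ that the cost accumulated \emph{outside} the segment $[s^{i,d},g^{i,d}]$ behaves like a valid (random) baseline for the prediction made at node $(i,d)$.

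The key step is a conditional-independence argument exploiting the tree structure of the FD-MSGT. From the factorized likelihood in Eq.~\eqref{eq:explicit-trajectory-likelihood}, the law of $\tau$ is a product of one factor $\pi_{\theta}(s_m^{i',d'}\mid s^{i',d'},g^{i',d'})$ per tree node. I would verify by index bookkeeping that, once we condition on the endpoints $(s^{i,d},g^{i,d})$, the factors belonging to the subtree rooted at node $(i,d)$ involve exactly the interior states $s_{(i-1)2^d+1},\dots,s_{i2^d-1}$ (hence $s_m^{i,d}$ and $C_{\tau}^{i,d}$), while the remaining factors — those of ancestors and disjoint nodes — involve only states outside the segment, touching our segment solely through its two endpoints. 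Since the two groups of factors share no free variable other than the conditioned endpoints, $s_m^{i,d}$ is independent of all complement states, and therefore of $(c_{\tau}-C_{\tau}^{i,d})$, which is a deterministic function of the complement states together with $(s^{i,d},g^{i,d})$, conditionally on $(s^{i,d},g^{i,d})$.

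With this independence in hand I finish exactly as in Proposition~\ref{prop:sgt-pg-helper-baseline}. Conditioning on the endpoints together with the complement states makes the factor $(c_{\tau}-C_{\tau}^{i,d})$ a constant that can be pulled out, while $s_m^{i,d}$ still follows $\pi_{\theta}(\cdot\mid s^{i,d},g^{i,d})$; the score-function identity $\mathbb{E}_{s_m\sim\pi_{\theta}}[\nabla_{\theta}\log\pi_{\theta}(s_m\mid\cdot)]=0$ from Eq.~\eqref{eq:helper} then annihilates the inner expectation, and the tower property returns zero for the full term. Summing the surviving $C_{\tau}^{i,d}$ contributions over $(i,d)$ yields the stated identity. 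I expect the main obstacle to be stating the conditional independence rigorously: one must read off from the product form~\eqref{eq:explicit-trajectory-likelihood} precisely which factors constitute the subtree rooted at $(i,d)$ and confirm that ancestors predict only boundary indices, so that the subtree couples to the rest of the tree only through $(s^{i,d},g^{i,d})$. Everything downstream of that is the same bias-free-baseline computation already carried out in the paper.
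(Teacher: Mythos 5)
Your proposal is correct and follows essentially the same route as the paper's proof: both start from Theorem~\ref{thm:sgt-pg-helper1}, swap the sum and expectation, split $c_{\tau}$ into the segment cost $C_{\tau}^{i,d}$ and the outside cost, and kill the outside term by conditioning on all states outside the segment's interior so that it becomes a constant annihilated by the score-function identity of Proposition~\ref{prop:sgt-pg-helper-baseline}. The only difference is cosmetic: you make explicit the conditional-independence bookkeeping (that the subtree at node $(i,d)$ couples to the rest of the tree only through its endpoints), which the paper invokes implicitly as part of the FD-MSGT definition.
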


\begin{proof}
We have that:
\begin{align}\label{eq:sgt-variance-reduction-helper}
    \nabla_{\theta}J(\theta)
    = &\mathbb{E}_{\rho(\pi_{\theta})}\left[
    c_{\tau}\cdot\sum_{d=1}^{D}{ \sum_{i=1}^{2^{D-d}}{
    \nabla_{\theta}\log{\pi_{\theta}\left(s_m^{i,d} \bigg\vert s^{i,d}, g^{i,d}\right) } } }
    \right]
    = 
    \sum_{d=1}^{D}{ \sum_{i=1}^{2^{D-d}}{
    \mathbb{E}_{\rho(\pi_{\theta})}\left[
    c_{\tau}\cdot
    \nabla_{\theta}\log{\pi_{\theta}\left(s_m^{i,d} \bigg\vert s^{i,d}, g^{i,d}\right) }
    \right]
    } }
    \nonumber \\
    =&
    \sum_{d=1}^{D}{ \sum_{i=1}^{2^{D-d}}{
    \mathbb{E}_{\rho(\pi_{\theta})}\left[
    \mathbb{E}_{\rho(\pi_{\theta})}\left[
    c_{\tau}\cdot
    \nabla_{\theta}\log{\pi_{\theta}\left(s_m^{i,d} \bigg\vert s^{i,d}, g^{i,d}\right) }
    \bigg\vert s_0 \dots s^{i,d}, g^{i,d} \dots s_T \right]
    \right]
    } }.
\end{align}{}

The first transition is the expectation of sums, and the second is the smoothing theorem.
We next expand the inner expectation in Eq.~\eqref{eq:sgt-variance-reduction-helper}, and partition the sum of cost $c_{\tau}$ into three sums: between indices $0$ to $(i-1)\cdot 2^d$, $(i-1)\cdot 2^d$ to $i\cdot 2^d$, and finally from $i\cdot 2^d$ to $T$.
\begin{align*}
    &\mathbb{E}_{\rho(\pi_{\theta})}\left[
    c_{\tau}\cdot
    \nabla_{\theta}\log{\pi_{\theta}\left(s_m^{i,d} \bigg\vert s^{i,d}, g^{i,d}\right) }
    \bigg\vert s_0 \dots s^{i,d}, g^{i,d} \dots s_T \right]
    \nonumber \\
    &=
    \mathbb{E}_{\rho(\pi_{\theta})}\left[
    \left(c_{0:(i-1)\cdot 2^d} + c_{i\cdot 2^d:T}\right)\cdot
    \nabla_{\theta}\log{\pi_{\theta}\left(s_m^{i,d} \bigg\vert s^{i,d}, g^{i,d}\right) }
    \bigg\vert s_0 \dots s^{i,d}, g^{i,d} \dots s_T \right]
    \nonumber\\
    &+
    \mathbb{E}_{\rho(\pi_{\theta})}\left[
    c_{(i-1)\cdot 2^d:i\cdot 2^d}\cdot
    \nabla_{\theta}\log{\pi_{\theta}\left(s_m^{i,d} \bigg\vert s^{i,d}, g^{i,d}\right) }
    \bigg\vert s_0 \dots s^{i,d}, g^{i,d} \dots s_T \right].
\end{align*}
Next, by the definition of FD-MSGT the costs between $s_0$ to $s_{(i-1)\cdot 2^d}$ and $s_{i\cdot 2^d}$ to $s_T$, are independent of the policy predictions between indices $(i-1)\cdot 2^d$ to $i\cdot 2^d$ -- making them constants.
As we showed in Proposition~\ref{prop:sgt-pg-helper-baseline}, we obtain that this expectation is 0. 
Thus the expression for $\nabla_{\theta}J(\theta)$ simplifies to:
\begin{align}
    \nabla_{\theta}J(\theta)
    =&
    \sum_{d=1}^{D}{ \sum_{i=1}^{2^{D-d}}{
    \mathbb{E}_{\rho(\pi_{\theta})}\left[
    \mathbb{E}_{\rho(\pi_{\theta})}\left[
    c_{\tau}\cdot
    \nabla_{\theta}\log{\pi_{\theta}\left(s_m^{i,d} \bigg\vert s^{i,d}, g^{i,d}\right) }
    \bigg\vert s_0 \dots s^{i,d}, g^{i,d} \dots s_T \right]
    \right]
    } } 
    \nonumber \\
    =&
    \sum_{d=1}^{D}{ \sum_{i=1}^{2^{D-d}}{
    \mathbb{E}_{\rho(\pi_{\theta})}\left[
    \mathbb{E}_{\rho(\pi_{\theta})}\left[
    c_{(i-1)\cdot 2^d:i\cdot 2^d}\cdot
    \nabla_{\theta}\log{\pi_{\theta}\left(s_m^{i,d} \bigg\vert s^{i,d}, g^{i,d}\right) }
    \bigg\vert s_0 \dots s^{i,d}, g^{i,d} \dots s_T \right]
    \right]
    } }
    \nonumber \\
    =&
    \sum_{d=1}^{D}{ \sum_{i=1}^{2^{D-d}}{
    \mathbb{E}_{\rho(\pi_{\theta})}\left[
    c_{(i-1)\cdot 2^d:i\cdot 2^d}\cdot
    \nabla_{\theta}\log{\pi_{\theta}\left(s_m^{i,d} \bigg\vert s^{i,d}, g^{i,d}\right) }
    \right]
    } }
\end{align}{}
\end{proof}{}

Finally, combining Propositions~\ref{prop:sgt-pg-helper-baseline} and~\ref{prop:variance-reduction-helper} in Theorem~\ref{thm:sgt-pg-helper1} provides us Theorem~\ref{thm:sgt-pg} in the main text.
\newpage

\section{Batch RL Baselines}
In Algorithm \ref{alg:fitted_Q} we present a goal-based versions of fitted-Q iteration (FQI)~\cite{ernst2005tree} using universal function approximation~\cite{schaul2015universal}, which we used as a baseline in our experiments.

\begin{algorithm}[htp]\caption{Fitted Q with Universal Function Approximation}
  \SetAlgoLined\DontPrintSemicolon
  \SetKwProg{myalg}{Algorithm}{}{}
  \label{alg:fitted_Q}
  \setcounter{AlgoLine}{0}
  \myalg{ }{
  \nl Input: dataset $D = \left\{ s, u, c, s'\right\}$, Goal reached threshold $\delta$ \\
  \nl Create transition data set $D_{trans} = \left\{ s, u, s'\right\}$ and targets $T_{trans} = \left\{ c\right\}$ with $s,s',c$ taken from $D$\\
  \nl Fit $\hat{Q}(s,u,s')$ to data in $D_{trans}$ \\
  \For{$k: 1...K$}{
  \nl Create random goal data set $D_{goal} = \left\{ s, u, g\right\}$ and targets 
  $$T_{goal} = \left\{ \left\{ c(s,u) + \min_{u'}\hat{Q}(s', u', g)_{||s'-g||>\delta}\right\}\right\}$$ with $s,u,s'$ taken from $D$ and $g$ randomly chosen from states in $D$\\
  \nl Fit $\hat{Q}(s,u,s')$ to data in $D_{goal}$ \\
  }
  }
\end{algorithm}

Next, in Algorithm \ref{alg:approximate_FW} we present an approximate dynamic programming version of Floyd-Warshall RL~\cite{kaelbling1993learning} that corresponds to the batch RL setting we investigate. This algorithm was not stable in our experiments, as the value function converged to zero for all states (when removing the self transition fitting in line 7 of the algorithm, the values converged to a constant value).

\begin{algorithm}[htp]\caption{Approximate Floyd Warshall}
  \SetAlgoLined\DontPrintSemicolon
  \SetKwProg{myalg}{Algorithm}{}{}
  \label{alg:approximate_FW}
  \setcounter{AlgoLine}{0}
  \myalg{ }{
  \nl Input: dataset $D = \left\{ s, u, c, s'\right\}$, Maximum path cost $C_{max}$ \\
  \nl Create transition data set $D_{trans} = \left\{ s, s'\right\}$ and targets $T_{trans} = \left\{ c\right\}$ with $s,s',c$ taken from $D$\\
  \nl Create random transition data set $D_{random} = \left\{ s, s_{rand}\right\}$ and targets $T_{random} = \left\{ C_{max}\right\}$ with $s,s_{rand}$ randomly chosen from states in $D$\\
  \nl Create self transition data set $D_{self} = \left\{ s, s\right\}$ and targets $T_{self} =\left\{ 0\right\}$ with $s$ taken from $D$\\
  \nl Fit $\hat{V}(s,s')$ to data in $D_{trans}, D_{random}, D_{self}$ \\
  \For{$k: 1...K$}{
  \nl Create random goal and mid-point data set $D_{goal} = \left\{ s, g\right\}$ and targets 
  $$T_{goal} = \left\{ \min\left\{ \hat{V}(s, g), \hat{V}(s, s_m) + \hat{V}(s_m, g)\right\}\right\}$$ with $s,s_m,g$ randomly chosen from states in $D$\\
  \nl Create self transition data set $D_{self} = \left\{ s, s\right\}$ and targets $T_{self} =\left\{ 0\right\}$ with $s$ taken from $D$\\
  \nl Fit $\hat{V}(s,s')$ to data in $D_{goal},D_{self}$ \\
  }
  }
\end{algorithm}
\newpage

\section{Learning SGT with Supervised Learning}
In this section we describe how to learn SGT policies for deterministic, stationary, and finite time dynamical systems from labeled data generated by an expert. 
This setting is commonly referred to as Imitation Learning (IL).
Specifically, we are given a dataset $D^{\pi^*}=\left\{ \tau_i\right\}_{i=1}^N$ of $N$ trajectory demonstrations, provided by an expert policy $\pi^*$.
Each trajectory demonstration $\tau_i$ from the dataset $D^{\pi^*}$, is a sequence of $T_i$ states, i.e. $\tau_i = s_0^i, s_1^i \dots s_{T_i}^i$.\footnote{We note that limiting our discussion to states only can be easily extended to include actions as well by concatenating states and actions. However, we refrain from that in this work in order to simplify the notations, and remain consistent with the main text. } 
In this work, we assume a goal-based setting, that is, we assume that the expert policy generates  trajectories that lead to a goal state which is the last state in each trajectory demonstration. Our goal is to learn a policy that, given a pair of current state and goal state $(s,g)$, predicts the trajectory that $\pi^*$ would have chosen to reach $g$ from $s$. 

\subsection{Imitation learning with SGT}
In this section, we describe the learning objectives for \textit{sequential} and \textit{sub-goal tree} prediction approaches under the IL settings. 
We focus on the Behavioral Cloning~(BC)\cite{pomerleau1989alvinn} approach to IL, where a parametric model for a policy $\hat{\pi}$ with parameters $\theta$ is learned by maximizing the log-likelihood of observed trajectories in $D^{\pi^*}$, i.e.,
\begin{align}\label{eq:bc-max-likelihood-general}
    \theta^* = \arg\max_\theta { \mathbb{E}_{\tau_i\sim D^{\pi^*}}\left[
        \log P_{\hat{\pi}}(\tau_i=s_0^i,s_1^i, \dots s_T^i|s_0^i,s_{T}^i; \theta)
    \right] }.
\end{align}
Denote the horizon $T$ as the maximal number of states in a trajectory. 
For ease of notation we assume $T$ to be the same for all trajectories\footnote{For trajectories with $T_i < T$ assume $s_i^{T_i}$ repeats $T-T_i$ times, alternatively, generate middle states from data until $T_i=T$}. 
Also, let $s_i=s_0^i$ and $g_i=s_{T_i}^i$ denote the start and goal states for $\tau_i$.
We ask -- how to best represent the distribution $P_{\hat{\pi}}(\tau|s,g;\theta)$?

The \textit{sequential} trajectory representation~\cite{pomerleau1989alvinn,ross2011reduction,zhang2018auto,qureshi2018motion}, a popular approach, decomposes $P_{\hat{\pi}}(s_0,s_1, \dots s_T|s,g;\theta)$ by sequentially predicting states in the trajectory conditioned on previous predictions.
Concretely, let $h_t=s_0,s_1,\dots,s_t$ denote the history of the trajectory at time index $t$, the decomposition assumed by the \textit{sequential} representation is $P_{\hat{\pi}}(s_0,s_1, \dots s_T|s,g;\theta) = P_{\hat{\pi}}(s_1 | h_0, g ; \theta) P_{\hat{\pi}}(s_2 | h_1, g;\theta)\dots P_{\hat{\pi}}(s_T | h_{T-1}, g ; \theta)$.
Using this decomposition, \eqref{eq:bc-max-likelihood-general} becomes:
\begin{align}\label{eq:bc-max-likelihood-sequential}
    \theta^* = \arg\max_\theta { \mathbb{E}_{\tau_i\sim D^{\pi^*}}\left[
        \sum_{t=1}^{T}{\log{P_{\hat{\pi}}(s_{t+1}^i|h_t^i,g^i; \theta)}}
    \right] }.
\end{align}
We can learn $P_{\hat{\pi}}$ using a batch stochastic gradient descent (SGD) method.
To generate a sample $(s_t, h_{t-1}, g)$ in a training batch, a trajectory $\tau_i$ is sampled from $D^{\pi^*}$, and an observed state $s_t^i$ is further sampled from $\tau_i$. 
Next, the history $h_{t-1}^i$ and goal $g_i$ are extracted from $\tau_i$.
After learning, sampling a trajectory between $s$ and $g$ is a straight-forward iterative process, 
where the first prediction is given by $s_1 \sim P_{\hat{\pi}}(s|h_0=s,g; \theta)$, and every subsequent prediction is given by $s_{t+1} \sim P_{\hat{\pi}}(s|h_t=(s,s_1,\dots s_t),g; \theta)$.
This iterative process stops once some stopping condition is met (such as a target prediction horizon is reached, or a prediction is 'close-enough' to $g$).

\textbf{The Sub-Goal Tree Representation:}
In SGT we instead predict the middle-state in a divide-and-conquer approach. Following Eq.~\eqref{eq:recursive-trajectory-likelihood} we can redefine the maximum-likelihood objective as
\begin{equation}\label{eq:bc-max-likelihood-sub-goal}
    \!\theta^* \!\!=\!\! \arg\max_\theta { \mathbb{E}_{\tau_i\sim D^{\pi^*}}\!\!\!\left[
        \log{P_{\hat{\pi}}(s_{T/2}^i | s^i, g^i ; \theta)}\!\!+\!\log{P_{\hat{\pi}}(s_{T/4}^i | s^i, s_{T/2}^i;\theta)}\!\!+\!\log{P_{\hat{\pi}}(s_{3T/4}^i | s_{T/2}^i,g^i;\theta)}\! \dots 
    \right] }.
\end{equation}

To organize our data for optimizing Eq.~\eqref{eq:bc-max-likelihood-sub-goal}, we first sample a trajectory $\tau_i$ from $D^{\pi^*}$ for each sample in the batch. From $\tau_i$ we sample two states $s_a^i$ and $s_b^i$ and obtain their midpoint $s_{\frac{a+b}{2}}^i$.
Pseudo-code for sub-goal tree prediction and learning are provided in Algorithm~\ref{alg:sub-goal-sl-predict} and Algorithm~\ref{alg:sub-goal-sl-train}.

\begin{algorithm}[htp]\caption{Sub-Goal Tree BC Trajectory Prediction }
  \SetAlgoLined\DontPrintSemicolon
  \SetKwProg{myalg}{Algorithm}{}{}
  \SetKwFunction{predictSGT}{PredictSGT}
  \label{alg:sub-goal-sl-predict}
  \myalg{ }{
  \setcounter{AlgoLine}{0}
  \nl Input: parameters $\theta$ of parametric distribution $P_{\hat{\pi}}$, start state $s$, goal state $g$, max depth $K$ \\
  \nl \Return [$s$] + \predictSGT{$\theta$, $s$, $g$, $K$} +[$g$]\\
  }{}
  \setcounter{AlgoLine}{0}
  \SetKwProg{myproc}{Procedure}{}{}
  \myproc{\predictSGT{$\theta$, $s_1$, $s_2$, $k$}}{
  \If{$k>0$}{
  \nl Predict midpoint $s_m \sim P_{\hat{\pi}}(s_m |s_1, s_2;\theta)$\\
  \nl \Return\predictSGT{$\theta$, $s_1$, $s_m$, $k-1$} + [$s_m$] + \predictSGT{$\theta$, $s_m$, $s_2$, $k-1$} \\
  }
  }
\end{algorithm} 

\begin{algorithm}[htp]\caption{Sub-Goal Tree BC SGD-Based Training}
  \SetAlgoLined\DontPrintSemicolon
  \SetKwProg{myalg}{Algorithm}{}{}
  \label{alg:sub-goal-sl-train}
  \myalg{ }{
  \setcounter{AlgoLine}{0}
  \nl Input: dataset $D=\left\{ \tau_i = s_0^i, s_1^i \dots s_{T_i}^i \right\}_{i=1}^N$, train steps $M$, batch size $B$\\
  \nl Initialize parameters $\theta$ for parametric distribution $P_{\hat{\pi}}(s_m |s, g;\theta)$\\
  \For{$i: 1...M$ }{
  \nl Sample batch of size $B$, each sample: $\tau_i \sim D$, $s_1, s_2 \sim \tau_i$\\
  \nl $s_m\leftarrow$ Get midpoint of $[s_1, s_2]$ according to $\tau_i$ for all items in batch\\
  \nl Update $\theta$ by minimizing the negative log-likelihood loss: \begin{align*}
       L = -\frac{1}{B}\cdot\sum_{b=1}^B{\nabla_{\theta}\log{P_{\hat{\pi}}(s_m^b|s_1^b, s_2^b; \theta)}}
  \end{align*}\\
  }
  \Return $\theta$ \\
  }
\end{algorithm}

\subsection{Imitation Learning Experiments}

\begin{figure*}
\centering
    \hfill
    \begin{subfigure}[b]{0.4\linewidth}
    \includegraphics[width=.97\textwidth]{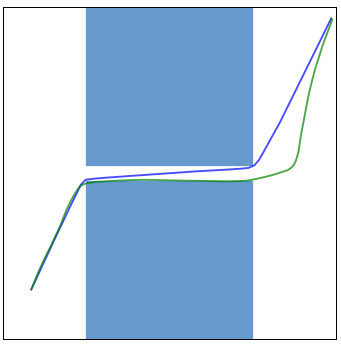}
    \caption{}
    \label{fig:easy}
  \end{subfigure}\hfill
  \begin{subfigure}[b]{0.4\linewidth}
    \includegraphics[width=.97\textwidth]{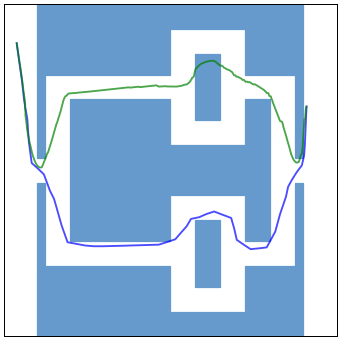}
    \caption{}
    \label{fig:hard}
  \end{subfigure}\hfill
\caption{IL Experiment domains and results. The \textit{simple} domain (a) and \textit{hard} domain (b). A point robot should move only on the white free-space from a room on one side to the room on the other side while avoiding the blue obstacles. SGT plan (blue) executed successfully, \textit{sequential} plan (green) collides with the obstacles. }
\label{fig:scenarios_il}
\end{figure*}

We compare the \textit{sequential} and \textit{Sub-Goal Tree} (SGT) approaches for BC.
Consider a point-robot motion-planning problem in a 2D world, with two obstacle scenarios termed \textit{simple} and \textit{hard}, as shown in Figure \ref{fig:scenarios_il}. In \textit{simple}, the distribution of possible motions from left to right is uni-modal, while in \textit{hard}, at least 4 modalities are possible.

For both scenarios we collected a set of 111K (100K train + 10K validation + 1K test)  expert trajectories from random start and goal states using a state-of-the-art motion planner (OMPL's\cite{sucan2012the-open-motion-planning-library} Lazy Bi-directional KPIECE \cite{csucan2009kinodynamic} with one level of discretization). 
To account for different trajectory modalities, we chose a Mixture Density Network (MDN)\cite{bishop1994mixture} as the parametric distribution of the predicted next state, both for the \textit{sequential} and the SGT representations. 
We train the MDN by maximizing likelihood using Adam~\cite{kingma2014adam}.
To ensure the same model capacity for both representations, we used the same network architecture, and both representations were trained and tested with the same data. Since the dynamical system is Markovian, the current and goal states are sufficient for predicting the next state in the plan, so we truncated the state history in the sequential model's input to contain only the current state.

For \textit{simple}, the MDNs had a uni-modal multivariate-Gaussian distribution, while for \textit{hard}, we experimented with 2 and 4 modal multivariate-Gaussian distributions, denoted as \textit{hard-2G}, and \textit{hard-4G}, respectively.
As we later show in our results the SGT representation captures the demonstrated distribution well even in the \textit{hard-2G} scenario, by modelling a bi-modal sub-goal distribution.

We evaluate the models using unseen start-goal pairs taken from the test set. 
To generate a trajectory, we predict sub-goals, and connect them using linear interpolation. 
We call a trajectory successful if it does not collide with an obstacle en-route to the goal. 
For a failed trajectory, we further measure the \textit{severity} of collision by the percentage of the trajectory being in collision.

Table \ref{tab:il-experiments_results} summarizes the results for both representations. The SGT representation is superior in all three evaluation criteria - motion planning success rate, trajectory prediction times (total time in seconds for the 1K trajectory predictions), and severity.
Upon closer look at the two \textit{hard} scenarios, the \textit{Sub Goal Tree} with a bi-modal MDN outperforms \textit{sequential} with 4-modal MDN, suggesting that the SGT trajectory decomposition better accounts for multi-modal trajectories.

\begin{table}[]
\centering
\begin{tabular}{l|ll|ll|ll}
\multirow{2}{*}{} & \multicolumn{2}{c|}{\textbf{Success Rate}} & \multicolumn{2}{c|}{\textbf{Prediction Times (Seconds)}} & \multicolumn{2}{c}{\textbf{Severity}} \\
                  & Sequential           & SGT            & Sequential               & SGT                 & Sequential           & SGT       \\ \hline
Simple            & 0.541                & \textbf{0.946}           & 487.179                  & \textbf{28.641}               & \textbf{0.0327}      & 0.0381              \\
Hard - 2G         & 0.013                & \textbf{0.266}           & 811.523                  & \textbf{52.22}                & 0.0803               & \textbf{0.0666}     \\
Hard - 4G         & 0.011                & \textbf{0.247}           & 1052.62                  & \textbf{53.539}               & 0.0779               & \textbf{0.0362}    
\end{tabular}
\caption{Results for IL experiments. 
}
\label{tab:il-experiments_results}
\vspace{-5mm}
\end{table}
\newpage

\section{Technical Details for NMP Experiments Section \ref{sec:sgt-pg-experiments}}\label{}
In this appendix we provide the full technical details of our policy gradient
experiments presented in Section~\ref{sec:sgt-pg-experiments}. 
We start by providing the explicit formulation of the NMP environment. 
We further provide a sequential-baseline not presented in the main text and its NMP environment and experimental results related to this baseline.
We also present another challenging scenario -- \textit{poles}, and the results of \textit{SGT-PG} and \textit{SeqSG} in it.
Finally, we conclude with modeling-related technical details \footnote{The code is attached to the submission (a GitHub project will be posted here upon acceptance)}.

\subsection{Neural Motion Planning Formulation}\label{sec:nmp-technical-details}

In Section~\ref{sec:sgt-pg-experiments}, we investigated the performance of \textit{SGT-PG} and \textit{SeqSG} executed in a NMP problem for the 7DoF Franka Panda robotic arm. 
That NMP formulation, denoted here as \textit{with-reset}, allows for independent evaluations of motion segments, a requirement for FD-MSGT. 
The state $s\in \mathbb{R}^9$ describes the robot's 7 joint angles and 2 gripper positions, normalized to $[-1,1]$.
The models, \textit{SGT-PG} and \textit{SeqSG}, are tasked with predicting the states (sub-goals) in the plan.

A segment $(s,s')$ is evaluated by running a PID controller tracking a linear joint motion from $s$ to $s'$. 
Let $s_{PID}$ denote the state after the PID controller was executed, and let $\alpha_{free}$ and $\alpha_{collision}$ be hyperparameters. We next define a cost function that encourages shorter paths, and discourages collisions;
if a collision occurred during the PID execution, or 5000 simulation steps were not enough to reach $s'$, a cost of $\alpha_{free}\cdot ||s-s_{PID}||_2 + \alpha_{collision}\cdot ||s'-s_{PID}||_2$ is returned.
Otherwise, the motion was successful, and the segment cost is $\alpha_{free}\cdot ||s-s'||_2$.
In our experiments we set $\alpha_{free}=1$ and $\alpha_{collision}=100$.
Figures~\ref{fig:self-collision-training-curve} and ~\ref{fig:wall-training-curve} show the training curves for the \textit{self-collision} and \textit{wall} scenarios respectively.

\subsection{The \textit{SeqAction} Baseline}
We also evaluated a sequential baseline we denoted as \textit{SeqAction}, which is more similar to classical RL approaches.
Instead of predicting the next state, \textit{SeqAction} predicts the next action to take (specifically the joint angles difference), and a position controller is executed for a single time-step instead of the PID controller of the previous methods. 

\textit{SeqAction} operates on a finite-horizon MDP (with horizon of $T=5000$ steps).
We follow the sequential NMP formulation of~\citet{jurgenson2019harnessing}; the agent gets a high cost on collisions, high reward when reaching close to the goal, or a small cost otherwise (to encourage shorter paths).
We denote this NMP variant \textit{no-reset} as segments are evaluated sequentially, without resetting the state between evaluations.
To make the problem easier, after executing an action in \textit{no-reset}, the velocity of all the links is manually set to 0.

Similarly to~\citet{jurgenson2019harnessing}, we found that sequential RL agents are hard to train for neural motion planning with varying start and goal positions without any prior knowledge or inductive bias. 
We trained \textit{SeqAction} in the \textit{self-collision} scenario using PPO~\citep{schulman2017proximal}, and although we achieved 100\% success rate when fixing both the start and the goal (to all zeros and all ones respectively), the model was only able to obtain less than 0.1 success rate when varying the goal state, and showed no signs of learning when varying both the start and the goal.


\subsection{The \textit{poles} Scenario}

\begin{figure*}
\centering
\includegraphics[width=1.0\textwidth]{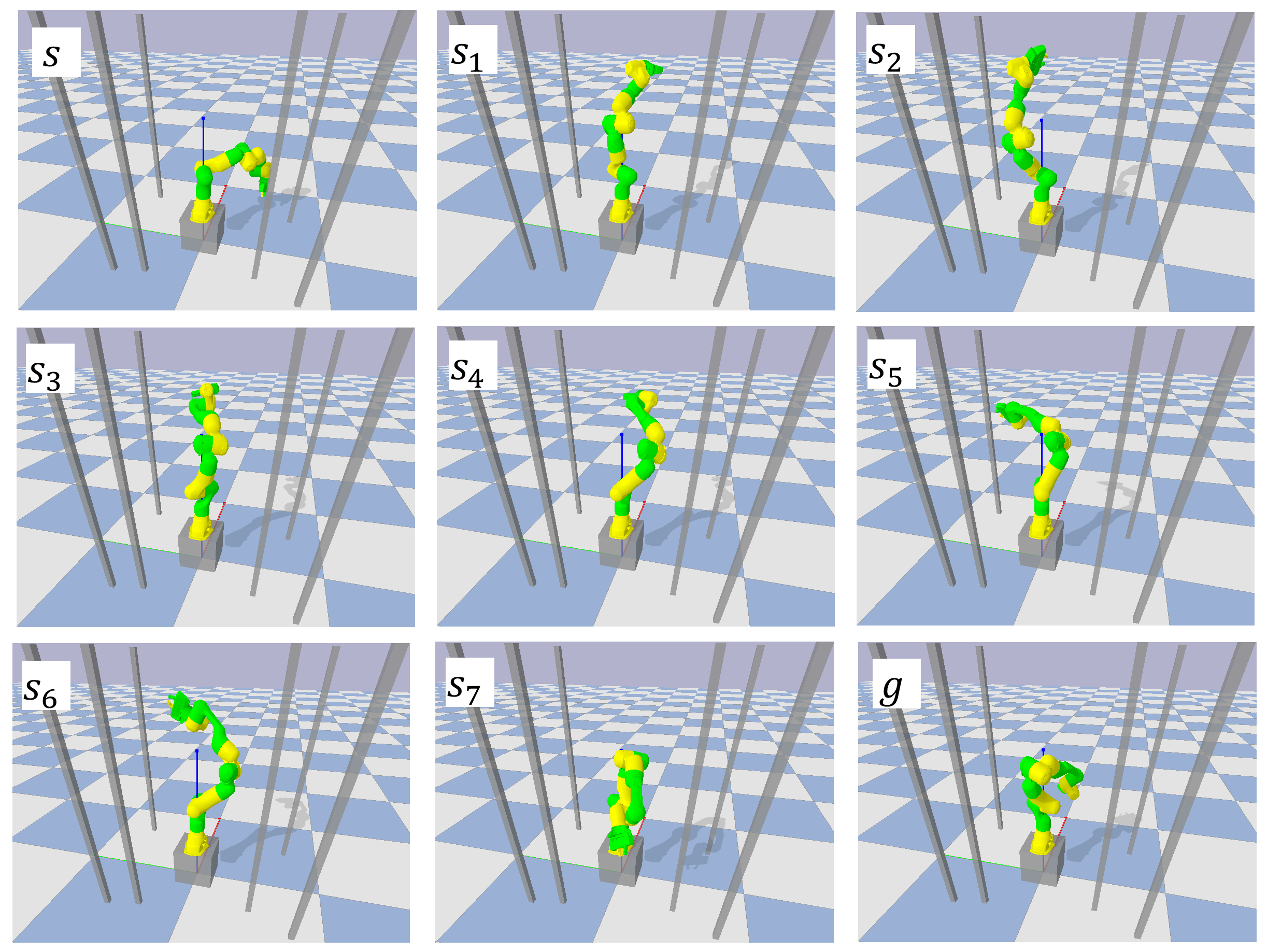}
\caption{Robot trajectory \textit{poles} scenario predicted with \textit{SGT-PG} with 7 sub-goals. Top left start state, bottom right goal state. Notice that the two final segments $s_6$ to $s_7$ and $s_7$ to $g$ are longer than the rest, but are made possible by the robot getting into position in state $s_6$. Note that \textit{SGT-PG} failed to find a trajectory for this $(s,g)$ pair with less sub-goals, indicating the hardness of the path.}
\label{fig:robot-trajectory-poles}
\vspace{-5mm}
\end{figure*}

\begin{figure*}
\centering
\includegraphics[width=0.8\textwidth]{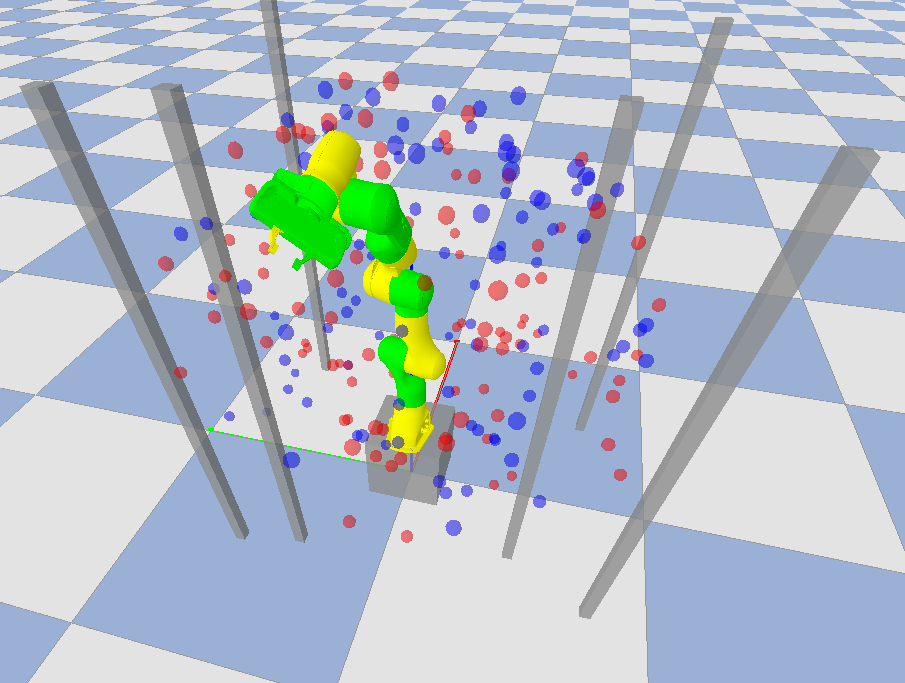}
\caption{Start (blue) and goal (red) states in the test set of the \textit{poles} scenario. Note that the agents are not exposed to these particular states during training.}
\label{fig:start-goal-pairs-poles}
\vspace{-5mm}
\end{figure*}

\begin{table}[]
\centering
\begin{tabular}{c|c|c}
model                   & \# Sub-goals & Success rate         \\ \hline
\multirow{3}{*}{SGT-PG} & 1            & 0.646$\pm$0.053          \\
                        & 3            & 0.663$\pm$0.153          \\
                        & 7            & \textbf{0.696$\pm$0.116} \\ \hline
\multirow{3}{*}{SeqSG}  & 1            & 0.616$\pm$0.013          \\
                        & 3            & 0.576$\pm$0.013          \\
                        & 7            & 0.49$\pm$0.02           
\end{tabular}
\caption{Success rate on the NMP poles scenario}
\label{tab:sgt-pg-results-poles}
\end{table}

We tested both \textit{SGT-PG} and \textit{SeqSG} on another scenario we call \textit{poles} where the goal is to navigate the robot between poles as shown in Figures~\ref{fig:robot-trajectory-poles} and~\ref{fig:start-goal-pairs-poles}. 
Similar to the previous scenarios, Table~\ref{tab:sgt-pg-results-poles} shows the success rates of both models for 1, 3, and 7 sub-goals. 
Again, we notice that \textit{SGT-PG} attains much better results and improves as more sub-goals are added, while the performance of \textit{SeqSG} deteriorates. 
Moreover, even for a single sub-goal, \textit{SGT-PG} obtains better success rates compared to \textit{SeqSG}.
The high range displayed by \textit{SGT-PG} is due to one random seed that obtained significantly lower scores.

\subsection{Neural Network Architecture and Training Procedure}\label{sec:nn-arch}

We next provide full technical details regarding the architecture and training procedure of \textit{SGT-PG} and \textit{SeqSG} described in Section~\ref{sec:sgt-pg-experiments}.

\textbf{Architecture and training of \textit{SGT-PG}:} The start and goal states, $s$ and $g$, are fed to a neural network with 3 hidden-layers of 20 neurons each and \textit{tanh} activations. 
The network learns a multivariate Gaussian-distribution over the predicted sub-goal with a diagonal covariance matrix. 
Since the covariance matrix is diagonal, the size of the output layer is $18=2\cdot 9$, 9 elements learn the bias, and 9 learn the standard deviation in each direction.  
To predict the bias, a value of $\frac{s+g}{2}$ is added to the prediction of the network. 
To obtain the final standard deviation, we add two additional elements to the standard deviation predicted by the neural network : (1) a small value (0.05), to prevent learning too narrow distributions, and (2) a learnable coefficient in each of the directions that depends on the distance between $s$ and $g$, which allows large segments to predict a wider spread of sub-goals. 
These two changes were incorporated in order to mitigate log-likelihood evaluation issues.

\textit{SGT-PG} trains on 30 $(s,g)$ pairs per training cycle, using the Adam optimizer~\citep{kingma2014adam} with a learning rate of 0.005 on the PPO objective~\citep{schulman2017proximal} augmented with a max-entropy loss term with coefficient 1, and a PPO trust region of $\epsilon=0.2$.
As mentioned in Section~\ref{sec:sgt-pg}, when training level $d$ for a single $(s,g)$ pair, we sample $M$ sub-goals (in our experiments $M=10$) from the multivariate Gaussian described above, take the mean predictions of policies of depth $d-1$ and lower, and take the mean of costs as a $(s,g)$-dependent baseline. 
The repetition allows us to obtain a stable baseline without incorporating another learnable module -- a value function.
We found this method to be more stable than training on 300 different start-goal pairs (no repetitions).
During test time we always take the mean prediction generated by the policies.

\textbf{Architecture of \textit{SeqSG}:} This model has two components, a policy and a value function. The policy architecture is identical to that of \textit{SGT-PG}. Note that only a single policy is maintained as the optimal policy is independent of the remaining horizon\footnote{Also, a policy-per-time-step approach will not scale to larger plans, as we cannot maintain $T=2^D$ different policies.}. 
We also found that learning the covaraince matrix causes stability issues for the algorithm, so the multivariate Gaussian only learns a bias term.
The value function of \textit{SeqSG} has 3 layers with 20 neurons each, and Elu activations. It predicts a single scalar which approximates $V(s,g)$, and is used as a state-dependent baseline to reduce the variance of the gradient estimation during training.

In order to compare apples-to-apples, \textit{SeqSG} also trains on 30 $(s,g)$ pairs with 10 repetitions each. 
We trained the policy with the PPO objective~\citep{schulman2017proximal} with trust region of $\epsilon=0.05$. We used the Adam optimizer~\citep{kingma2014adam}, with learning rate of $0.005$ but we clipped the gradient with L2 norm of 10 to stabilize training. 
The value function is trained to predict the value with mean-squared error loss. We also trained it with Adam with a learning rate of $0.005$ but clipped the gradient L2 norm to 100.

We note that finding stable parameters for \textit{SeqSG} was more challenging than for \textit{SGT-PG}, as apparent by the gradient clippings, the lower trust region value, and our failure to learn the mutivariate  Gaussian distribution covariance matrix.

\begin{figure*}
\centering
\includegraphics[width=0.8\textwidth]{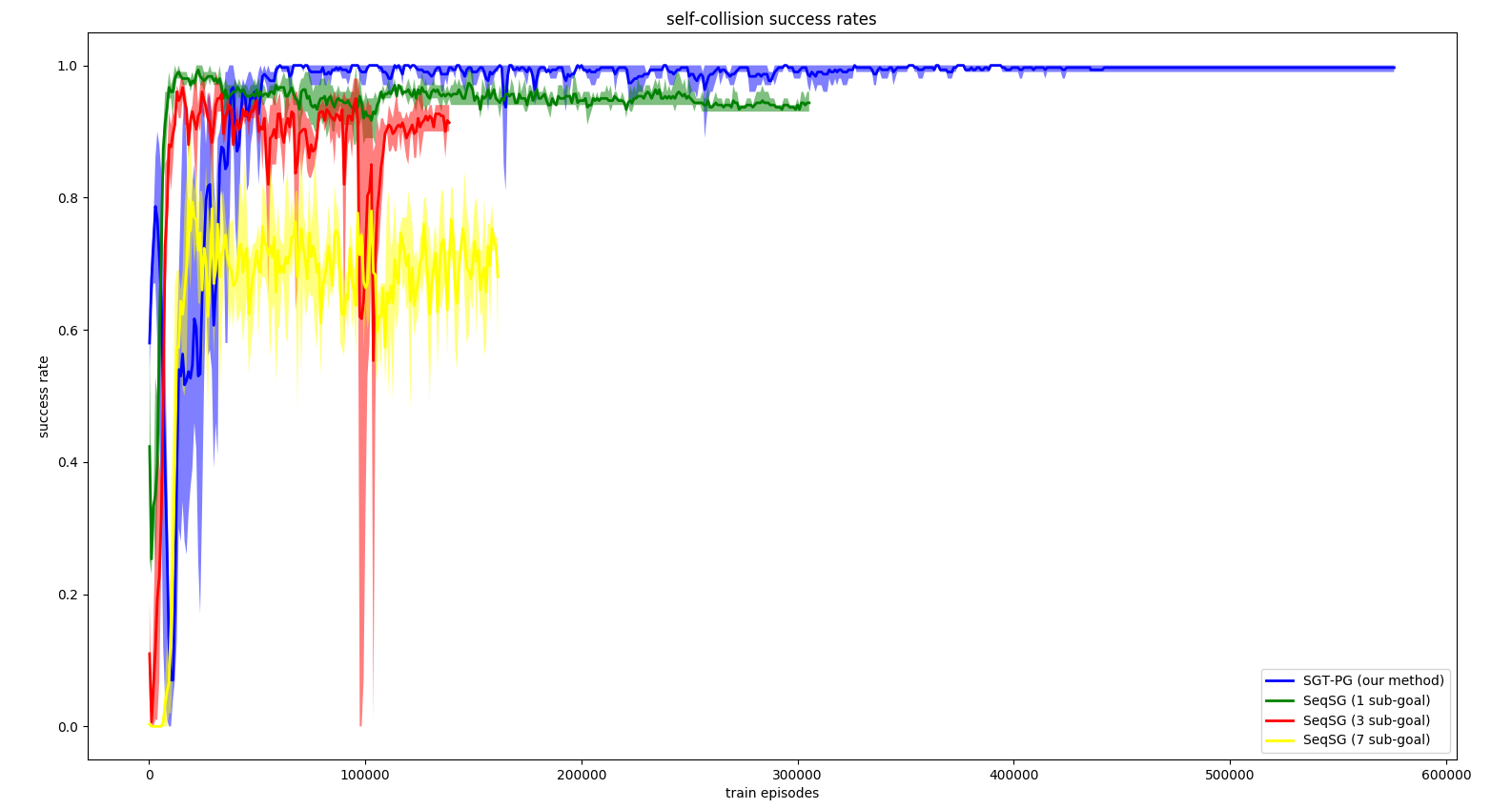}
\caption{Success rates over training episodes of the \textit{self-collision} scenario.}
\label{fig:self-collision-training-curve}
\vspace{-5mm}
\end{figure*}

\begin{figure*}
\centering
\includegraphics[width=0.8\textwidth]{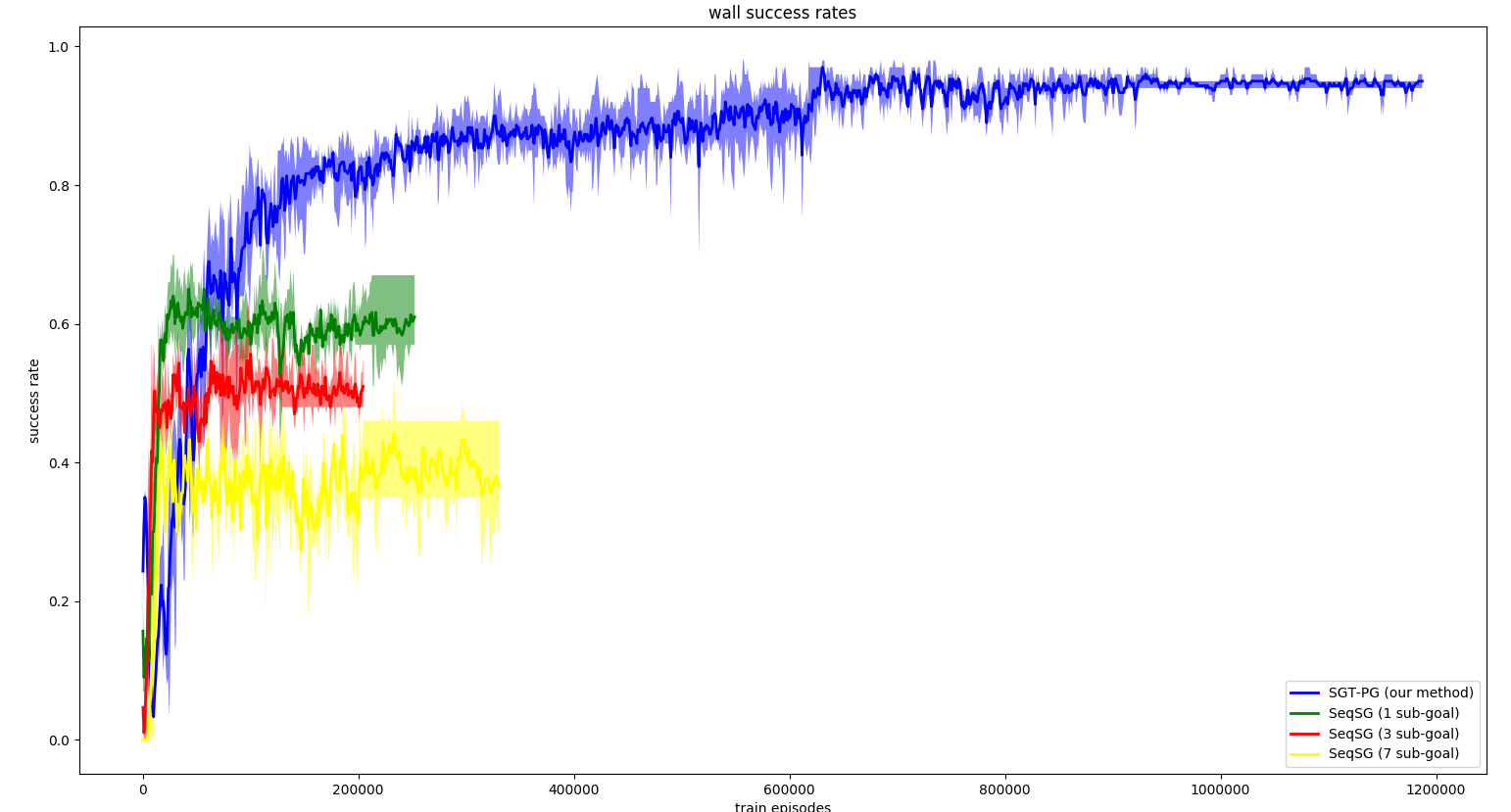}
\caption{Success rates over training episodes of the \textit{wall} scenario.}
\label{fig:wall-training-curve}
\vspace{-5mm}
\end{figure*}


\end{document}